\documentclass[11pt, a4paper, twocolumn]{google}

\PassOptionsToPackage{numbers,compress}{natbib}

\usepackage[utf8]{inputenc} %
\usepackage[T1]{fontenc}    %
\usepackage{amsfonts}       %

\usepackage{natbib}
\usepackage{listings}
\lstset{
  language=Python,
  basicstyle=\ttfamily\small, %
  keywordstyle=\color{blue},   %
  stringstyle=\color{red},    %
  commentstyle=\color{green},  %
  showstringspaces=false,      %
  breaklines=true,            %
  frame=single,               %
  numbers=left,               %
  numberstyle=\tiny,          %
  captionpos=b,              %
  upquote=true
}
\usepackage{subfig}
\usepackage{comment}
\usepackage{algorithm}
\usepackage{algpseudocode}

\usepackage{geometry}
\usepackage{tikz}
\usetikzlibrary{shapes.geometric, arrows.meta, positioning, fit,trees}

\usepackage{varwidth}
\usepackage{enumitem}
\usepackage{placeins}
\usepackage{pgfplotstable}
\usepackage{listings}

\usepackage{amsmath,amsfonts,bm}
\usepackage{amsthm}

\def\eqref#1{equation~\ref{#1}}

\def\1{\bm{1}}

\newcommand{\valid}{\mathcal{D_{\mathrm{valid}}}}

\def\vtheta{{\bm{\theta}}}
\def\vTheta{{\bm{\Theta}}}
\def\vphi{{\bm{\phi}}}

\def\vx{{\bm{x}}}
\def\vy{{\bm{y}}}

\DeclareMathAlphabet{\mathsfit}{\encodingdefault}{\sfdefault}{m}{sl}
\SetMathAlphabet{\mathsfit}{bold}{\encodingdefault}{\sfdefault}{bx}{n}

\DeclareMathOperator*{\argmax}{arg\,max}

\usepackage{todonotes}
\makeatletter
\newcommand*\iftodonotes{\if@todonotes@disabled\expandafter\@secondoftwo\else\expandafter\@firstoftwo\fi}  %
\makeatother

\tikzstyle{latent} = [circle, minimum width=1.5cm, draw=blue, fill=blue!20, thick, inner sep=0pt, dashed]
\tikzstyle{observed} = [circle, minimum width=1.5cm, draw=black, fill=black!20, thick, inner sep=0pt]

\newcommand{\hyperedge}[5][180]{
     \draw (#2.#1) ++(#1:1.2)  edge (#2) edge (#3) edge[->] node[above] {#5} (#4);
}

\newcommand{\hyperedgeleft}[5][180]{
     \draw (#2.#1) ++(#1:1.2)  edge (#2) edge (#3) edge[->] node[left] {#5} (#4);
}

\usepackage{listings}
\lstset{
    language=Python,
    frame=single,
    basicstyle=\ttfamily,
    keywordstyle=\color{blue},
    commentstyle=\color{gray}
}
\usepackage{siunitx}

\sisetup{
    round-mode = figures, %
    round-precision = 2   %
}
\usepackage{booktabs}
\usepackage{hyperref}
\usepackage{url}
\usepackage[export]{adjustbox}

\usepackage[noabbrev,capitalize]{cleveref} %
\usepackage{nicefrac}       %
\usepackage[activate=true,
    final,
    tracking=false,   %
    kerning=true,
    spacing=true,
    factor=1050,
    stretch=30,
    shrink=30]{microtype}      %
\microtypecontext{spacing=nonfrench}
\newcommand{\model}{\textsc{tac}}
\newtheorem{lemma}{Lemma}
\usepackage{thm-restate}
\DeclareRobustCommand{\thinskip}{\hskip 0.16667em\relax}
\def\emdash{---}
\def\d@sh#1#2{\unskip#1\thinskip#2\thinskip\ignorespaces}
\def\Dash{\d@sh\nobreak\emdash}
\def\Ldash{\d@sh\empty{\hbox{\emdash}\nobreak}}
\def\Rdash{\d@sh\nobreak\emdash}
\usepackage{inconsolata}

\usepackage{graphicx}

\nonfrenchspacing

\author{Chu-Cheng Lin}
\author{Daiyi Peng}
\author{Yifeng Lu}
\author{Ming Zhang}
\author{Eugene Ie}
\uselogo{}

\affil{Google}
\correspondingauthor{kitsing@google.com}

\usepackage{xcolor}         %

\title{Type-Compliant Adaptation Cascades: Adapting Programmatic LM Workflows to Data}

\begin{abstract}
Reliably composing Large Language Models (LLMs) for complex, multi-step workflows remains a significant challenge. The dominant paradigm \Dash optimizing discrete prompts in a pipeline \Dash is notoriously brittle and struggles to enforce the formal compliance required for structured tasks. We introduce Type-Compliant Adaptation Cascades (\model s), a framework that recasts workflow adaptation as learning typed probabilistic programs. \model s treat the entire workflow, which is composed of parameter-efficiently adapted LLMs and deterministic logic, as an unnormalized joint distribution. This enables principled, gradient-based training even with latent intermediate structures. We provide theoretical justification for our tractable optimization objective, proving that the optimization bias vanishes as the model learns type compliance. Empirically, \model s significantly outperform state-of-the-art prompt-optimization baselines. Gains are particularly pronounced on structured tasks, improving FinQA from $12.0\%$ to $24.7\%$ for a Qwen 3 8B model, MGSM-SymPy from $57.1\%$ to $75.9\%$ for a Gemma 2 27B model, MGSM from $1.6\%$ to $27.3\%$, and MuSR from $36.5\%$ to $62.6\%$ for a Gemma 7B model. \model s offer a robust and theoretically grounded paradigm for developing reliable, task-compliant LLM systems.

\end{abstract}

\begin{document}

\maketitle

\section{Introduction}
\label{sec:intro}

\begin{figure*}[ht!]
    \centering
    \subfloat[{\bf cot-cascade-structure}]{\label{fig:cot-cascade-structure}
    \resizebox{.2\textwidth}{!}{
    \begin{tikzpicture}[>=Stealth]
    \node[observed, align=center] (x) {$\mathbf{z}_1$ \\ type: $\tau_i$};
    \node[latent, align=center, right=1.5cm of x] (r) {$\mathbf{z}_3$ \\ type: $\tau_r$};
    \node[latent, align=center,  below left=2cm and .5cm of r] (ir) {$\mathbf{z}_4$ \\ type: $\tau_{ir}$};
    \node[observed, align=center, below=2cm of ir] (y) {$\mathbf{z}_2$ \\ type: $\tau_o$};

    \draw[->] (x) edge node[above] {$(\tau_i, \tau_{r}, \vtheta_3)$}  (r);
    \hyperedge[-60]{x}{r}{ir}{$\mathrm{combine\_ir}: \tau_{i} \times \tau_r \rightarrow \tau_{ir}$};
    \draw[->] (ir) edge node[above] {$(\tau_{ir}, \tau_{o}, \vtheta_4)$}  (y);
\end{tikzpicture}}}%
\subfloat[{\bf expression-cascade-structure}]{
  \includegraphics[width=.3\linewidth]{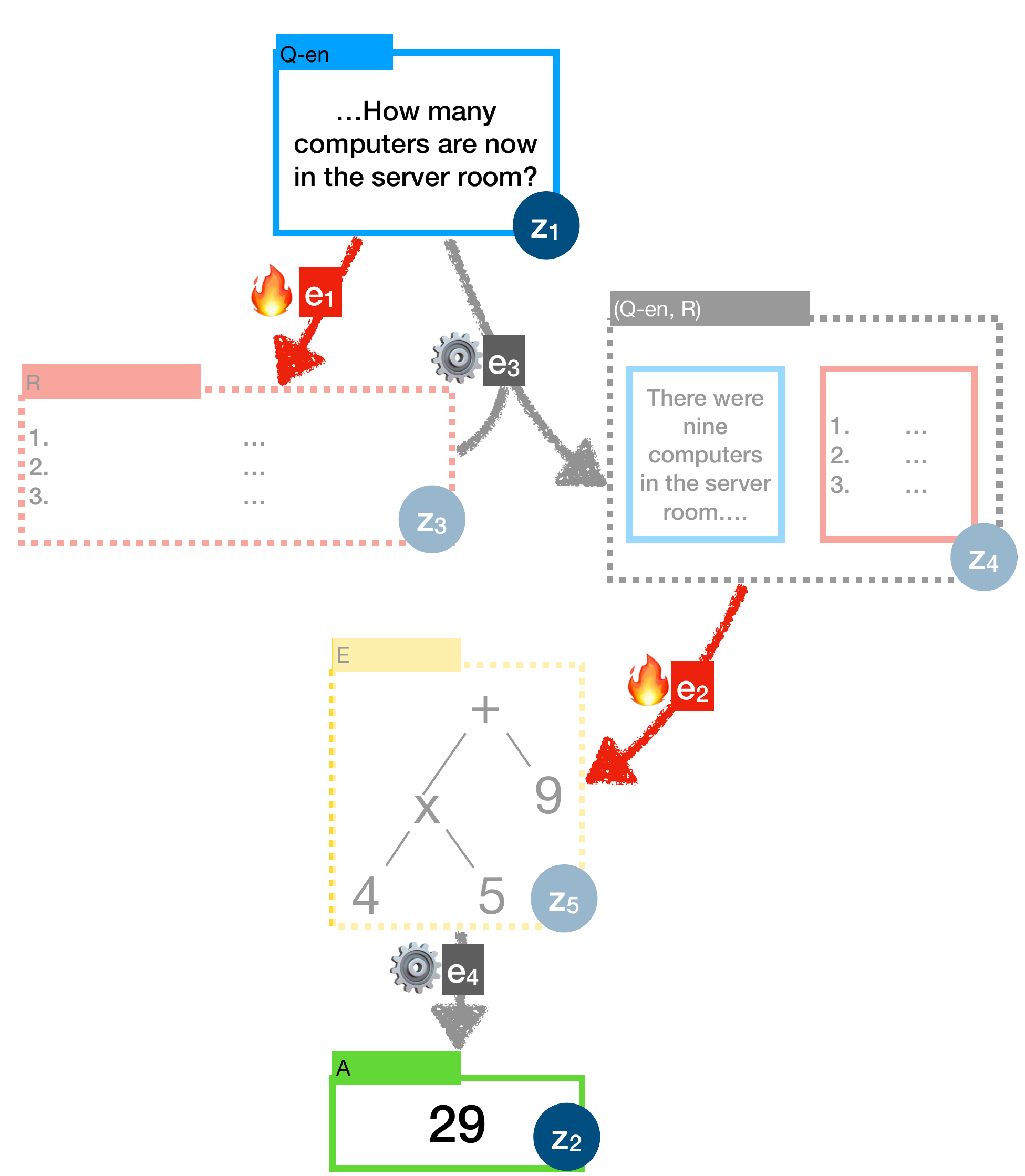}
    \label{fig:cascade}
}
\caption{\small
    Two \model\,workflow patterns experimented in this paper. We illustrate the more complicated \cref{fig:cascade} with example node values (we also explore additional patterns in \cref{sec:additional-studies-pattern-design}). Dashed-boundary nodes indicate variables whose values are not available in annotated data, and solid-boundary nodes indicate nodes with training time observable values. 
    A main message of this work is that {\bf we can treat an entire typed workflow as a single probabilistic program, whose parameters are lightweight PEFT modules, allowing end-to-end training with latent variables}, instead of defining workflows imperatively as fixed-parameter systems. }
    \label{fig:tac-patterns}
\end{figure*}
{Language modeling} \citep{Rosenfeld2018} refers to fitting a parametric probability distribution over strings (a \emph{language model}) $p_{\vtheta}$ to observed data.
Large Language Models (LLMs) \citep{gpt3} scale both the model and training datasets to massive sizes. LLMs have an extraordinary emergent capability: once trained, these distributions can be effectively manipulated simply by \emph{asking} \Dash conditioning the distribution on different natural language instruction prefixes \citep{wei2022finetuned} \Dash a practice widely known as prompting.

The expressive power and accessibility of this natural language interface have catalyzed the rapid development of programmatically composed workflows and agentic systems \citep{khattab2022demonstrate, Chase_LangChain_2022,yao2023react,wu2024autogen}. By structuring inputs and chaining model calls, practitioners can construct complex systems capable of multi-step reasoning and interaction. However, the success of these systems is inherently subject to the pretrained LLM’s capabilities in instruction following. Moreover, prompt engineering remains brittle: minor textual variations can lead to drastic performance degradation \citep{cao2024on}.
This brittleness can also cause type violations in a programmatic workflow: while inference-time constrained decoding methods mitigate type violation problems, full compliance remains theoretically impossible for complex types \citep{lin-etal-2021-limitations} on autoregressive models.
Optimizing these composed systems therefore often devolves into a difficult discrete optimization problem over the space of possible prompts \Dash a challenge often addressed through heuristic search \citep{zhou2023large,pryzant-etal-2023-automatic,yuksekgonul2025optimizing} and reinforcement learning \citep{jafari-etal-2024-morl}, both of which suffer from variance issues.

In this paper, we propose a return to the foundational perspective: {fitting} composed LLM distributions to downstream tasks as \emph{parametric probability models}. Instead of tackling the inherent difficulties of optimizing discrete verbal instructions, we \emph{adapt} a composed workflow (such as ones shown in \cref{fig:tac-patterns}), as a parametric latent variable model, to maximize data likelihood.
Each step in the workflow is a probabilistic typed transformation backed by a parameter-efficient fine-tuning (PEFT) adaptor, with valid typed objects as its support.
Different workflows are declaratively defined as different generative stories that sequentially transform objects with either learned adaptors or deterministic algorithms.
Thus, we transform the problem of workflow adaptation from an ad-hoc, discrete optimization search problem to training and inference of latent variable models. This allows us to leverage well-established machine learning techniques to optimize the entire system directly, while keeping training and inference manageable, thanks to the adaptors' parameter and computational efficiency.

This approach, which we term Type-Compliant Adaptation Cascades (\model s), is an end-to-end trainable probabilistic programming framework.
As parametric latent variable models, \model s can be optimized using gradient descent methods. Moreover, as unnormalized distributions over typed objects, Posterior inference of \model s is decoupled from training, enabling techniques such as amortized inference and classification by ranking.

Our primary contributions are:
\begin{itemize}[leftmargin=0pt,noitemsep,topsep=0pt]
\item \textbf{Framework}. We formalize typed LM workflows as probabilistic programs: each learned hyperedge is an unnormalized conditional distribution that assigns zero mass to outputs violating type contracts.
\item \textbf{Theory}.
We propose a tractable and theoretically-grounded training algorithm, \model STaR. We prove that our optimization objective, while computationally efficient, correctly converges to the ideal solution as the model learns to become type-compliant. Specifically, we show that the bias in our gradient approximation vanishes as the model's adherence to type constraints increases during training (\cref{thm:unnormalizedisgoodenough,thm:boundingmlegradients}).
\item \textbf{Practice}. Across QA, structured generation, and classification tasks that require heavy reasoning (MGSM, MGSM-SymPy, FinQA, MuSR) and model families (Gemma, Qwen), \model s consistently outperform strong DSPy prompt-optimization baselines. Gains are largest when (1) base models are smaller and (2) tasks require strict structure. For example, on MGSM-SymPy with a Gemma 27B model, \model s achieve $\mathbf{75.9}$ vs. $\mathbf{57.1}$; on FinQA, $\mathbf{34.0}$ vs.\ $\mathbf{12.7}$ (Gemma 27B) and $\mathbf{24.7}$ vs.\ $\mathbf{12.0}$ (Qwen 3 8B). With a Gemma 7B model, MGSM improves from $\mathbf{1.6}$ to $\mathbf{27.3}$, FinQA from $\mathbf{0.7}$ to $\mathbf{9.7}$, and MuSR from $\mathbf{36.5}$ to $\mathbf{62.6}$.
\end{itemize}

\paragraph{Summary of results.} (1) Gradient-based adaptation \emph{within} typed workflows is markedly more effective than discrete prompt search for structured tasks. (2) Flexible training- and test-time posterior inference help performance. (3) Empirically, estimated type compliance mass $\mathcal{Z}_{\vtheta}$ rises rapidly during training and correlates with accuracy, supporting our theoretical justification for the unnormalized objective.

\section{Type-Compliant Adaptor Cascades}
\label{sec:method}

The core idea of \model s is to decompose a task into a hypergraph of interconnected transformations. Formally, a \model\, is represented as a directed acyclic hypergraph (DAH) $C = (\mathbf{Z}, \mathbf{E})$.\footnote{We use a reasoning workflow that generates domain-specific code, illustrated in \cref{fig:cascade}, as a running example. The task is to take a math question in English (input type \texttt{Q\_en}), generate a step-by-step rationale (intermediate type \texttt{R}), convert the rationale into a formal arithmetic expression (intermediate type \texttt{E}), and finally, have a deterministic function evaluate this expression to produce the answer (output type \texttt{A}). This section formalizes how such an intuitive sketch is realized within the \model\, framework.} The acyclic constraint ensures that the workflow has a well-defined topological order for execution and guarantees termination of the generative process.

\paragraph{Nodes.} The nodes $\mathbf{Z} = \{\mathbf{z}_1, \mathbf{z}_2, \ldots, \mathbf{z}_M\}$ in a \model\, act as containers for typed data. Each node $\mathbf{z}_m$ is associated with a specific data type $\tau \in \mathcal{T}$, and holds string representations $\in \Sigma^*$ for $\tau$-typed objects. Special nodes are designated as the \textbf{input node} $\mathbf{z}_1$ and the \textbf{output node} $\mathbf{z}_2$ (\emph{e.g.}, holding the initial question of type \texttt{Q\_en} and the final answer of type \texttt{A} in \cref{fig:cascade}, respectively).

\paragraph{Hyperedges.} Hyperedges $\mathbf{E} = \{e_1, e_2, \ldots, e_K\}$ define the transformations between nodes. A hyperedge $e_k$ connects a set of source nodes $S_k \subseteq \mathbf{Z}$ (its inputs) to a set of target nodes $T_k \subseteq \mathbf{Z}$ (its outputs). Transformations in \model s can be either learnable (LM adaptors) or fixed (deterministic algorithms):

\begin{itemize}[leftmargin=0pt,noitemsep,topsep=0pt]
    \item {\bf LM adaptor hyperedges.} These are stochastic transformations implemented by PEFT-adapted LMs. An adaptor $(\tau_i, \tau_o, \vtheta)$ defines an unnormalized distribution over $\vy \in \Sigma^*$ given input string $\vx$:\footnote{This distribution may be unnormalized because while $p_{LM}$  is a distribution over all strings, \cref{eq:adaptor_prob} restricts the support to only strings that are valid instances of $\tau_o$. Thus, the total probability mass may sum to less than $1$ if the LM assigns probability to invalid strings.}
\begin{align}
    \label{eq:adaptor_prob}
    \tilde{p}(\vy \mid \vx; \vtheta) &=  p_{LM}(\vy \mid \vx ; \vtheta) \mathbb{I}(\mathbf{z}_{t} \in \mathrm{valid}(\tau_o)),
\end{align}
where $p_{LM}(\cdot \mid \vx ; \vtheta)$ is a normalized  distribution over strings, conditioned on $\tau_i$-typed string representation $\vx$, and parametrized by adaptor parameters $\vtheta$, and $\mathrm{valid}(\tau_o) \subseteq \Sigma^*$ is the set of strings that represent valid $\tau_o$-typed objects (we will further discuss them in \cref{sec:interface_ops}). 

\item {\bf Deterministic algorithm hyperedges.} These are fixed, non-learnable transformations, such as a self-contained Python function. A deterministic algorithm $f$ maps an input object of type $\tau_i$ to an output object of type $\tau_o$.
Under the probabilistic view, we represent them as $\delta$ distributions:
\begin{align}
    \label{eq:deterministic_prob}
    \tilde{p}(\vy \mid \vx; f) &=  \delta_{\texttt{canon}(f(\texttt{parse}(x, \tau_i)))}(y)
\end{align}
where $\texttt{canon}$ (see \cref{sec:interface_ops}) produces a canonicalized string for an object, and $\texttt{parse}$ converts strings back to typed objects.
\end{itemize}

\subsection{Interfacing LLMs with Typed Data: Parsing and Canonicalization}
\label{sec:interface_ops}

A crucial subtlety in integrating LLMs into typed workflows is bridging their native string-based operation with typed data, which is typically handled by data validation libraries such as Pydantic\footnote{\url{https://github.com/pydantic/pydantic}} and LangFun.\footnote{\url{https://github.com/google/langfun}. Examples of generated prompts are listed in \cref{sec:prompts-generated-by-langfun}.} Here we formalize the conversion under the \model\,formalism as two operations \texttt{parse} and \texttt{canon}:

\paragraph{Parsing ($\texttt{parse}$).}
When an LM adaptor produces an output string $\vy$ intended to represent an object of type $\tau_o$, this string is validated and converted into a usable typed object by the algorithm $\texttt{parse}: \Sigma^* \times \mathcal{T} \rightarrow \mathcal{O} \cup \{\text{error}\}$.\footnote{We note that while primitive data types (\emph{e.g.}, Python types \texttt{str} and \texttt{list}) appear in common workflows, \texttt{parse} can be any computable function, and can be leveraged by a practitioner to implement complex business logic. For example, one can define a Python custom type \texttt{CoherentDialog} where valid objects are strings deemed coherent by an external LLM-backed classifier, and adapt LM adaptors in a \model\,to generate and work with such objects. Implementation details are further discussed in \cref{sec:implementation}.\label{ft:validation-logic}}
For example, in \cref{fig:cascade},  $\mathbf{z}_5$ has the deterministic function $e_4$ as an outgoing edge. During execution of the probabilistic program, $\texttt{parse}(\mathbf{z}_5, \texttt{E})$ attempts to convert $\mathbf{z}_5$ into a SymPy expression object (typed \texttt{E}).  If the conversion fails, an error is signaled. 
For convenience, we use $\mathrm{valid}(\tau) = \{ \texttt{parse}(\vy, \tau) \neq \text{error} \mid \vy \in \Sigma^* \}$ to denote valid string representations of $\tau$.

\paragraph{Canonicalization ($\texttt{canon}$).}
Conversely, inputs of LM adaptor hyperedges must be converted into a consistent string format that the adaptor expects. The $\texttt{canon}: \mathcal{O} \rightarrow \Sigma^*$ operation maps a typed object to a unique string representation \Dash we call such strings \emph{canonicalized}. The invertibility of $\texttt{canon}$ (\emph{i.e.}, $\texttt{parse}(\texttt{canon}(o), \tau_o) = o$) %
in turn ensures that deterministic hyperedges have support over only one string given a valid input, eliminating spurious ambiguity \citep{Cohen2012EliminationOS}.

\subsection{\model s As Programs And Distributions}
\label{sec:tacs_operation_prob}

\model s admit both a program view, and also a probabilistic view\footnote{These two views are also summarized in \cref{tab:dual-semantics}.}:
\begin{itemize}[leftmargin=0pt,noitemsep,topsep=0pt]
\item {\bf \model s are probabilistic programs.} Executing a \model\, in the forward direction involves processing data through the hypergraph, respecting the topological order of nodes and hyperedges. Using our running example from \cref{fig:cascade}: the process traverses the hypergraph, starting at the input variable $\mathbf{z}_1$ (typed \texttt{Q\_en}), and ending at the output variable $\mathbf{z}_2$ (typed \texttt{A}). A general process is described in \cref{alg:cascade-forward}.
\item {\bf \model s are also probability distributions.} \model s also define unnormalized joint probability distributions over all node assignments $\mathbf{Z}^* = (\mathbf{z}^*_1, \mathbf{z}^*_2, \ldots, \mathbf{z}^*_M)$. This score reflects the plausibility of a complete execution trace according to the model's components:
\begin{align}
    \log \tilde{p}_{\vtheta}(\mathbf{Z}^*) &= \sum_{k} \log \tilde{p}_{\vtheta}( \{\mathbf{z}^*_t\}_{t \in T_k} \mid \{ \mathbf{z}^*_s \}_{s \in S_k} ; e_k ),
    \label{eq:joint_prob}
\end{align}
where $\vtheta$ represent all adaptor parameters used in the \model , and $\tilde{p}_{\vtheta}(\cdot|\cdot; e_k)$ is the conditional probability defined by the LM adaptor (\cref{eq:adaptor_prob}) or deterministic algorithm (\cref{eq:deterministic_prob}) associated with $e_k$. The unnormalized distribution view connects \model s to the broader family of language model cascades \citep{dohan2022languagemodelcascades}, but with the key distinction that \model s are designed for end-to-end adaptation.
\end{itemize}

\paragraph{Estimating unnormalized marginal probabilities.} LM adaptors in a \model\,can be used as proposal distributions to get an importance sampling estimate of the unnormalized marginal probability. Let $\mathbf{z}_m$ be a node coming out of an LM adaptor, an $N$-sample estimate of the unnormalized probability that $\mathbf{z}_m$ equals $c$: $\tilde{p}(\mathbf{z}_m = c ; \vtheta)$ is:
\begin{align}
    \hat{\tilde{p}}_{\mid \mathbf{z}_1}(m, c, N) &= \sum_{n=1}^{N} \left[ \frac{{p}_{LM}(\mathbf{z}_m = c; \vtheta)}{N \cdot {p}_{LM}(\mathbf{z}_m = \mathbf{z}_m^{(n)}; \vtheta)} \right]
    \label{eq:unnormalized-marginal-estimate}
\end{align}
where $\mathbf{z}_m^{(n)}$ is the $n$-th sample of $\mathbf{z}_m$ (possibly drawn using \cref{alg:cascade-forward}). \Cref{eq:unnormalized-marginal-estimate} is an unbiased importance sampling estimate of the unnormalized probability ${\tilde{p}}(\mathbf{z}_m = c \mid \mathbf{z}_1 ; \vtheta)$ (since $\mathrm{supp}(\tilde{p}) \subseteq \mathrm{supp}(p_{LM})$). In general, $\mathbf{z}_m$ has an infinite support, making the \emph{normalized} probability ${p}(\mathbf{z}_m = c \mid \mathbf{z}_1 ; \vtheta)$ intractable. In the special case that $\mathbf{z}_m$  has finite support, \cref{eq:unnormalized-marginal-estimate} can be used to estimate the \emph{normalized} marginal probability $\hat{p}(\mathbf{z}_m = c \mid \mathbf{z}_1 ; \vtheta) = \frac{ \hat{\tilde{p}}_{\mid \mathbf{z}_1}(m, c, N) }{\sum_{c'} \hat{\tilde{p}}_{\mid \mathbf{z}_1}(m, c', N) }$. We leverage \cref{eq:unnormalized-marginal-estimate} to estimate normalized output probabilities $p(\mathbf{z}_2 \mid \mathbf{z}_1 ; \vtheta)$, for ranking classification outputs in \cref{sec:amortized-exp}.

\section{Adapting \model s}
\label{sec:training}

Since \model s generally define distributions over unobserved (latent) intermediate variables, Monte Carlo Expectation-Maximization (MC-EM) algorithms \citep{Wei1990AMC} provide a suitable training paradigm for marginalized likelihood maximization.\footnote{We acknowledge that another reasonable approach for training \model s is reinforcement learning, and note the connection between \model STaR and RL in \cref{sec:problem}.} MC-EM algorithms iteratively refine model parameters by alternating between an E-step (sampling latent variables) and an M-step (optimizing parameters based on these samples). The Self-Taught Reasoner (STaR) algorithm \citep{zelikman2022} is a notable instance of MC-EM. We generalize STaR to the \model\ framework for workflows with arbitrarily typed inputs and outputs, resulting in the \model STaR algorithm.

\subsection{\model STaR}

\label{sec:tac-star}

The \model STaR algorithm (\cref{alg:model-star}) employs an iterative MC-EM approach to train the parameters $\vtheta$ of the type-compliant LM adaptors within a \model\ $C$. As with the original STaR algorithm, \model STaR alternates between E- and M-steps:
\begin{itemize}[leftmargin=0pt,noitemsep,topsep=0pt]
    \item {\bf E-step: Sampling Latent Variables.} We first try to execute the \model\ $C$ as a probabilistic program under the \texttt{forward} algorithm (\cref{alg:cascade-forward}). If \texttt{forward} succeeds, we have a complete assignment of values $\mathbf{Z}^* = (\mathbf{z}_1^*, \mathbf{z}_2^*, \ldots, \mathbf{z}_M^*)$ for all nodes in the \model\,$C$. and can proceed to M-step. Otherwise, we attempt a \textbf{rationalization heuristic} step. Inspired by the original STaR algorithm which conditions on the correct answer in the second attempt, we construct a `fallback' \model , whose input node takes $(x^*, y^*)$ as input, with the rest of the workflow unchanged. This essentially asks `\emph{what intermediate steps would lead from $x^*$ to $y^*$?}', analogous to the inverse rendering problem \citep{ritchie23}. A forward pass is then executed on this new \model\,to sample $(\mathbf{z}_2, \ldots , \mathbf{z}_M)$, now conditioned on both the original input $x^*$ and the desired output $y^*$. This encourages the generation of latent intermediate steps that are consistent with the correct final answer.
    \item {\bf M-step: Parameter Optimization.} EM-style algorithms generally do MLE updates on samples collected in the E-step. As \model s are generally unnormalized models, proper MLE updates require computing partition function gradients. Denoting the partition function summing all possible assignments as $\mathcal{Z}_{\vtheta} = \sum_{\mathbf{Z}'} \tilde{p}_{\vtheta}(\mathbf{Z}')$, the gradient of the $\log$-likelihood $\mathcal{L} = \log p(\mathbf{Z}^*)$ is:
\begin{align}
    \nabla_{\vtheta} \mathcal{L} = \nabla_{\vtheta} \log \tilde{p}_{\vtheta}(\mathbf{Z}^*) - \nabla_{\vtheta} \log \mathcal{Z}_{\vtheta}.
    \label{eq:grad-true-loss}
\end{align}
Estimation of the $\log$ partition function's gradients $\nabla_{\vtheta} \log \mathcal{Z}_{\vtheta}$ is typically expensive and can have high variance \citep{Goodfellow-et-al-2016-partition}. We thus drop this term, and optimize for the unnormalized $\log$-likelihood $\mathcal{L}'(\vtheta) = \log \tilde{p}_{\vtheta}(\mathbf{Z}^*)$ instead.\footnote{{\bf Remark on efficiency.} Since gradients of the $\log$ unnormalized probability decompose linearly as $\nabla_{\vtheta} \left( \log \tilde{p}_{\vtheta}(\mathbf{Z}^*) \right) = \sum_{k} \nabla_{\vtheta} \log \tilde{p}_{\vtheta}( \{\mathbf{z}^*_t\}_{t \in T_k} \mid \{ \mathbf{z}^*_s \}_{s \in S_k} ; e_k )$, computation of adaptors' gradients can be parallelized easily.
This embarrassingly parallel structure ensures computational scalability, allowing the M-step to be efficiently distributed across available compute resources. 
\Cref{alg:cascade-backward} computes $\log \tilde{p}_{\vtheta}(\mathbf{Z}^*)$ and its gradients $\nabla_{\vtheta} \log \tilde{p}_{\vtheta}(\mathbf{Z}^*)$. These gradients are then used in a standard gradient-based optimization algorithm to update $\vtheta$.}
\end{itemize}

\paragraph{Tractable optimization via compliance.} While ignoring the partition function gradient generally leads to biased gradient estimation, the \model\,formalism ensures this strategy is both tractable and robust. This becomes evident as we rewrite $\mathcal{L}'(\vtheta) = \mathcal{L}(\vtheta) + \log \mathcal{Z}_{\vtheta}$:
optimizing the unnormalized likelihood $L'(\vtheta)$ is equivalent to jointly maximizing the normalized likelihood $\mathcal{L}(\vtheta)$ and the model's type compliance (the partition function $\log \mathcal{Z}_{\vtheta}$ is maximized at $\log \mathcal{Z}_{\vtheta} = 0$ when  $\vtheta$ is well-specified).
This approach is justified theoretically under the assumption that the adapted models can perfectly model type-valid outputs (\emph{i.e.}, the model family is well-specified):\footnote{We refer the reader to \cref{sec:partition-function-discussion} for proofs of formal statements in this section.}
\begin{restatable}{theorem}{unnormalizedisgoodenough}
Let $\vTheta$ be the entire parameter space and let $\vTheta' \subseteq \vTheta$ be the subset of well-specified parameters. Assume $\vtheta^*$ uniquely maximizes the normalized likelihood $p_\vtheta(\mathbf{z}_{2..M}|\mathbf{z}_1)$ and resides $\in \vTheta'$.
Then, $\hat{\vtheta} = \argmax_{\vtheta \in \vTheta} \tilde{p}_\vtheta(\mathbf{z}_{2..M}|\mathbf{z}_1) \implies \hat{\vtheta} = \vtheta^*$.
\label{thm:unnormalizedisgoodenough}
\end{restatable}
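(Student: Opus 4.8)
\section*{Proof proposal}

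The plan is to leverage the identity $\log\tilde{p}_\vtheta = \log p_\vtheta + \log\mathcal{Z}_\vtheta$ (the relation $\mathcal{L}'(\vtheta)=\mathcal{L}(\vtheta)+\log\mathcal{Z}_\vtheta$ noted just before the theorem, now with everything conditioned on the input $\mathbf{z}_1$; throughout I abbreviate $p_\vtheta := p_\vtheta(\mathbf{z}_{2..M}\mid\mathbf{z}_1)$ and $\tilde{p}_\vtheta := \tilde{p}_\vtheta(\mathbf{z}_{2..M}\mid\mathbf{z}_1)$), together with a uniform bound $\mathcal{Z}_\vtheta\le 1$ that becomes tight exactly on the well-specified set $\vTheta'$. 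To make the bound precise I would write $\mathcal{Z}_\vtheta=\sum_{\mathbf{z}_{2..M}}\tilde{p}_\vtheta(\mathbf{z}_{2..M}\mid\mathbf{z}_1)$, expand via the hyperedge factorization of \cref{eq:joint_prob}, and marginalize the node variables in reverse topological order. Each non-input node $\mathbf{z}_m$ is produced by exactly one hyperedge $e_k$; summing that hyperedge's factor over $\mathbf{z}_m$ (and over any co-targets in $T_k$) gives exactly $1$ for a deterministic hyperedge, since \cref{eq:deterministic_prob} is a $\delta$-distribution, and gives $\sum_{\vy}p_{LM}(\vy\mid\vx;\vtheta)\,\mathbb{I}(\vy\in\mathrm{valid}(\tau_o))\le 1$ for an LM-adaptor hyperedge by \cref{eq:adaptor_prob}. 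Peeling factors off one at a time and inducting over the topological order yields $\mathcal{Z}_\vtheta\le 1$ for every $\vtheta\in\vTheta$, with equality iff every adaptor places zero mass on type-invalid strings --- i.e., precisely when $\vtheta\in\vTheta'$. Consequently $\tilde{p}_\vtheta\le p_\vtheta$ for all $\vtheta\in\vTheta$, and $\tilde{p}_\vtheta = p_\vtheta$ for all $\vtheta\in\vTheta'$.

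Then I would locate the unnormalized maximizer and conclude by a squeeze. For an arbitrary $\vtheta\in\vTheta$ we have $\tilde{p}_\vtheta\le p_\vtheta\le p_{\vtheta^*}=\tilde{p}_{\vtheta^*}$, where the middle step is the optimality of $\vtheta^*$ for the normalized likelihood and the last equality uses $\vtheta^*\in\vTheta'$; hence $\vtheta^*$ attains $\max_{\vtheta\in\vTheta}\tilde{p}_\vtheta$ and in particular the $\argmax$ is nonempty. Now let $\hat{\vtheta}$ be any element of $\argmax_{\vtheta\in\vTheta}\tilde{p}_\vtheta$. Then $p_{\vtheta^*}=\tilde{p}_{\vtheta^*}=\tilde{p}_{\hat{\vtheta}}\le p_{\hat{\vtheta}}\le p_{\vtheta^*}$, which forces all of these quantities to coincide; in particular $p_{\hat{\vtheta}}=p_{\vtheta^*}$, and since $\vtheta^*$ is by hypothesis the \emph{unique} maximizer of the normalized likelihood, $\hat{\vtheta}=\vtheta^*$. (If the likelihood is over a dataset of several examples, the identical argument applies verbatim to the summed log-likelihood by linearity of the decomposition.)

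I expect the only real work to lie in the partition-function step: making the reverse-topological marginalization rigorous for a directed acyclic hypergraph --- in particular verifying that each non-input node is the target of a unique hyperedge (so that factors can be summed out one at a time without disturbing the not-yet-summed ones), handling hyperedges with more than one target, and confirming that the equality case $\mathcal{Z}_\vtheta = 1 \iff \vtheta\in\vTheta'$ coincides with the paper's notion of well-specification. Once the bound $\mathcal{Z}_\vtheta\le 1$ (tight on $\vTheta'$) is in hand, the remaining reasoning is an elementary chain of inequalities.
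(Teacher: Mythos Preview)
Your proposal is correct and follows essentially the same route as the paper: establish $\mathcal{Z}_\vtheta\le 1$ by inducting over the topological order (the paper does this via Lemmas~\ref{thm:local-well-specified}--\ref{thm:induction}, first showing each local factor sums to at most $1$, then chaining), use $\vtheta^*\in\vTheta'$ to get $\mathcal{Z}_{\vtheta^*}=1$, and finish with the same squeeze $\tilde{p}_{\vtheta^*}=p_{\vtheta^*}\ge p_\vtheta\ge\tilde{p}_\vtheta$. Your final step is in fact slightly more careful than the paper's: you explicitly show that any $\hat\vtheta$ in the $\argmax$ of $\tilde p$ must satisfy $p_{\hat\vtheta}=p_{\vtheta^*}$ before invoking uniqueness, whereas the paper's wording leaves that implication somewhat implicit. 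One minor caveat: your ``iff'' claim for $\mathcal{Z}_\vtheta=1\Leftrightarrow\vtheta\in\vTheta'$ is stronger than what you actually use (and the ``only if'' direction can fail if a non-compliant adaptor is unreachable), but since the argument only requires the ``if'' direction this does not affect correctness.
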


Moreover, while optimizing $\mathcal{L}'(\vtheta)$ introduces a bias by ignoring the gradient term $\nabla_{\vtheta} \log \mathcal{Z}_{\vtheta}$, this bias is bounded below a constant multiplicative factor of $(1 - \mathcal{Z}_{\vtheta})$ under the common assumption that $\Vert \nabla_{\vtheta} p_{LM}(\cdot \mid \vx ; \vtheta ) \Vert$ is uniformly bounded:
\begin{restatable}{theorem}{boundingmlegradients}
Let $\vtheta = \{ \vtheta_{1} \ldots \vtheta_{K} \}$ be the union of a $K$-adaptor \model 's LM adaptor parameters . If $\forall \mathbf{z}_{k,1} \in \Sigma^*, \mathbf{z}_{k,2} \in \Sigma^*,  \Vert \nabla{\vtheta} \left( \sum \log p_{LM}(\mathbf{z}_{k,2} \mid \mathbf{z}_{k,1} ; \vtheta ) \right) \Vert_{\infty} \leq G$, then $\nabla_{\vtheta} \log \mathcal{Z}_{\vtheta} \leq 2 G (1 - \mathcal{Z}_{\vtheta}) $.
\label{thm:boundingmlegradients}
\end{restatable}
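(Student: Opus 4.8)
The plan is to write $\mathcal Z_\vtheta$ as an acceptance probability of an \emph{unrestricted} sampler and then bound $\nabla_\vtheta\log\mathcal Z_\vtheta$ by a total-variation estimate in which the factor $(1-\mathcal Z_\vtheta)$ appears naturally. First I would introduce $Q_\vtheta$, the law over complete traces $\mathbf Z$ obtained by executing the cascade in topological order but with each learned hyperedge sampling from its \emph{unconstrained} $p_{LM}(\cdot\mid\cdot;\vtheta)$ and each deterministic hyperedge emitting $\texttt{canon}(f(\texttt{parse}(\cdot)))$ when its input parses (and a fixed type-invalid sentinel string otherwise). Since every conditional factor sums to $1$ over its target node, $Q_\vtheta$ is a genuine normalized distribution for every $\vtheta$. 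Because the type-validity indicators and the deterministic $\delta$-factors are $\vtheta$-independent and, on any given trace, take only the values $0$ or $1$, one checks --- using invertibility of $\texttt{canon}$ from \cref{sec:interface_ops} --- that $\tilde p_\vtheta(\mathbf Z)=Q_\vtheta(\mathbf Z)\,\mathbb I_V(\mathbf Z)$, where $V$ is the event ``every node of $\mathbf Z$ is type-valid''. Hence $\mathcal Z_\vtheta=\Pr_{Q_\vtheta}(V)$, the normalized cascade law is the conditional $p_\vtheta:=\tilde p_\vtheta/\mathcal Z_\vtheta=Q_\vtheta(\cdot\mid V)$, and on $\mathrm{supp}(\tilde p_\vtheta)$ we have $\nabla_\vtheta\log\tilde p_\vtheta(\mathbf Z)=\nabla_\vtheta\log Q_\vtheta(\mathbf Z)=g(\mathbf Z)$, where $g(\mathbf Z):=\nabla_\vtheta\!\big(\sum_k\log p_{LM}(\mathbf z_{k,2}\mid\mathbf z_{k,1};\vtheta)\big)$ is the aggregate adaptor score at the in/out pairs occurring in $\mathbf Z$. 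The hypothesis says precisely that $\lVert g(\mathbf Z)\rVert_\infty\le G$ for \emph{every} trace, including the type-invalid ones reached off the support, which is exactly what the argument needs.

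Next I would record two identities, each by differentiating a normalization constant and swapping $\nabla_\vtheta$ with the series (justified by dominated convergence from $\lVert\nabla_\vtheta\tilde p_\vtheta(\mathbf Z)\rVert_\infty\le G\,\tilde p_\vtheta(\mathbf Z)$ and $\sum_\mathbf Z\tilde p_\vtheta(\mathbf Z)=\mathcal Z_\vtheta<\infty$, under standard smoothness of $p_{LM}$ in $\vtheta$). First, $\nabla_\vtheta\log\mathcal Z_\vtheta=\E_{p_\vtheta}[\,g\,]$. Second, since $Q_\vtheta$ is normalized for all $\vtheta$, the score identity gives $\E_{Q_\vtheta}[\,g\,]=\nabla_\vtheta\!\sum_\mathbf Z Q_\vtheta(\mathbf Z)=\nabla_\vtheta 1=0$. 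Subtracting the second from the first, $\nabla_\vtheta\log\mathcal Z_\vtheta=\sum_\mathbf Z\big(p_\vtheta(\mathbf Z)-Q_\vtheta(\mathbf Z)\big)g(\mathbf Z)$, whence $\lVert\nabla_\vtheta\log\mathcal Z_\vtheta\rVert_\infty\le\lVert g\rVert_\infty\,\lVert p_\vtheta-Q_\vtheta\rVert_1\le G\,\lVert p_\vtheta-Q_\vtheta\rVert_1$.

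I would then compute $\lVert p_\vtheta-Q_\vtheta\rVert_1$. From the mixture decomposition $Q_\vtheta=\mathcal Z_\vtheta\,Q_\vtheta(\cdot\mid V)+(1-\mathcal Z_\vtheta)\,Q_\vtheta(\cdot\mid V^{c})=\mathcal Z_\vtheta\,p_\vtheta+(1-\mathcal Z_\vtheta)\,Q_\vtheta(\cdot\mid V^{c})$ we get $p_\vtheta-Q_\vtheta=(1-\mathcal Z_\vtheta)\big(p_\vtheta-Q_\vtheta(\cdot\mid V^{c})\big)$; since $p_\vtheta$ is supported on $V$ and $Q_\vtheta(\cdot\mid V^{c})$ on $V^{c}$, these are mutually singular and their $L_1$ distance is $2$. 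Hence $\lVert p_\vtheta-Q_\vtheta\rVert_1=2(1-\mathcal Z_\vtheta)$ and $\lVert\nabla_\vtheta\log\mathcal Z_\vtheta\rVert_\infty\le 2G(1-\mathcal Z_\vtheta)$, which gives the stated (coordinatewise) bound and collapses to $0$ as $\mathcal Z_\vtheta\to1$, matching the intuition that a perfectly type-compliant cascade has a flat $\log$-partition. (If desired, bounding $\nabla_\vtheta\mathcal Z_\vtheta$ directly instead gives the sharper $\lVert\nabla_\vtheta\log\mathcal Z_\vtheta\rVert_\infty\le G(1-\mathcal Z_\vtheta)/\mathcal Z_\vtheta$, an improvement on $2G(1-\mathcal Z_\vtheta)$ once $\mathcal Z_\vtheta>\tfrac12$.)

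The hard part will be the bookkeeping in the first paragraph: making $Q_\vtheta$ \emph{exactly} normalized despite deterministic hyperedges that may reject their inputs, and verifying $\tilde p_\vtheta=Q_\vtheta\cdot\mathbb I_V$ for a genuine branching hypergraph (hyperedges with several sources and targets) rather than a linear chain --- this is where the topological order and $\texttt{parse}/\texttt{canon}$ invertibility do the real work, and where one must confirm that the hypothesis's uniform bound on $g$ indeed covers the type-invalid in/out pairs encountered off $\mathrm{supp}(\tilde p_\vtheta)$. Once that scaffolding and the usual differentiation-under-the-sum regularity are granted, the two score-function identities and the elementary $L_1$ computation are routine.
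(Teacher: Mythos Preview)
Your proposal is correct and follows essentially the same route as the paper's proof: introduce the unrestricted normalized law (your $Q_\vtheta$, the paper's $p_\vtheta$), write $\nabla_\vtheta\log\mathcal Z_\vtheta$ as the conditional expectation of the score, subtract the unconditional score (which vanishes by the score identity), and bound the resulting difference of expectations by $G$ times twice the total variation $(1-\mathcal Z_\vtheta)$. You are more explicit than the paper about the construction of $Q_\vtheta$ across deterministic hyperedges and about the regularity needed to differentiate under the sum, but the analytic skeleton is identical.
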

\cref{thm:unnormalizedisgoodenough,thm:boundingmlegradients} provide theoretical assurance that if the model achieves high type compliance  as we optimize for $\mathcal{L}'(\vtheta) = \mathcal{L}(\vtheta) + \log \mathcal{Z}_{\vtheta}$, the \model STaR M-step update approaches true MLE update. Empirically, we observe \model STaR rapidly drives $\mathcal{Z}_{\vtheta}$ towards $1$ (\cref{sec:structural-compliance}).

\subsection{Amortized \model STaR}
\label{sec:amortized_model_star}

Amortized \model STaR (\cref{alg:amortized-model-star}) generalize the `fallback' rationalization heuristic in \model STaR as parametric inference networks \citep{Kingma2014,pmlr-v32-mnih14}, jointly trained to approximate the true posterior given observed input and outputs. By learning to propose better, task-adapted latent variable configurations, Amortized \model STaR can hopefully lead to more efficient training and potentially better performance of the model \model .
For model \model\,$C$ with nodes $\mathbf{z}_1 \ldots \mathbf{z}_M$, we construct an inference network \model\,$C'$ with nodes $\mathbf{z}'_1 \ldots \mathbf{z}'_M$, which is trained alongside with $C$. %
In this work, we construct $\mathbf{z}'_2 \ldots \mathbf{z}'_M$ to have the same types as $\mathbf{z}_2 \ldots \mathbf{z}_M$, except for its input node $\mathbf{z}_1'$, which has a type to represent the input-output pair $(x^*, y^*)$. Moreover, we construct $C'$ so that every adaptor hyperedge $e_k$ in $C$ has a counterpart $e_k'$ in $C'$ that is additionally conditioned on $\mathbf{z}_1'$. We train $C'$ alternately with $C$, with the goal of making the unnormalized distribution of $C'$ over its nodes except for $\mathbf{z}_1'$ approximate the posterior over $C$'s intermediate nodes, conditioning on $(x^*, y^*)$ observations. Denoting the unnormalized distribution of $C'$ as $\tilde{q}_{\vphi}$ parametrized by adaptors' parameters $\vphi$, we hope to learn $\vphi$ such that $\tilde{q}_{\vphi}( \mathbf{z}'_m \mid \mathbf{z}'_1 = \texttt{canon}((x^*, y^*))) \approx {p}_{\vtheta}( \mathbf{z}_m \mid \mathbf{z}_1 = x^*_c, \mathbf{z}_2 = y^*_c)$, where $x^*_c = \texttt{canon}(x^*)$, $y^*_c = \texttt{canon}(y^*)$, $\forall m \in [2 .. M]$. Approximating the posterior ${p}_{\vtheta}( \mathbf{z}_m \mid \mathbf{z}_1 = \texttt{canon}(x^*), \mathbf{z}_2 = \texttt{canon}(y^*))$ as $\hat{p}$ using self-normalized multiple importance sampling \citep{veach1995}, we optimize $\vphi$ to minimize $\mathrm{KL}[\hat{p} || \tilde{q}_{\vphi}]$ following \cite{Bornschein2014ReweightedW,lin-eisner-2018-neural}.

\section{Experiments}

\label{sec:experiments}

To empirically validate \model\,models, we conduct QA, code-like structured generation, and classification experiments on subsets of MGSM \citep{shi2022}, FinQA \citep{chen-etal-2021-finqa}, and MuSR \citep{sprague2024musr} datasets,\footnote{We defer the study of how different \model\,patterns affect performance to \cref{sec:additional-studies-pattern-design}, where we expand our experiments to include HotPotQA tasks \citep{yang-etal-2018-hotpotqa}.} adapting both instruction-tuned Gemma 7B and Gemma 2 27B (referred to as \texttt{gemma-1.1-7b-it} and \texttt{gemma-2-27b-it}) \citep{gemmateam2024gemma2improvingopen}, and Qwen 3 8B models (\texttt{Qwen3-8B}) \citep{qwen3}.
We aim to answer the following research questions: 
\begin{itemize}[leftmargin=0pt,noitemsep,topsep=0pt]
    \item {\bf (\cref{sec:effectiveness-against-prompt-optimization}) Are \model s competitive against existing approaches?} \model s differ from existing LM adaptation approaches in two major ways: 1) \model s support gradient-based learning in a unified probabilistic programming framework (when compared against prior prompt optimization-focused LM programming frameworks such as DSPy); and 2) \model s support structured workflows by design (when compared to the original STaR algorithm). We hypothesize that such difference translates into meaningful performance improvements.
    \item {\bf (\cref{sec:amortized-exp}) Is exploiting \model s' probabilistic flexibility effective?} 
    Probability models (such as \model s) benefit from the decoupling of probabilistic modeling and inference procedures, allowing conditioning on additional observations \emph{a posteriori}. We evaluate whether exploiting this flexibility is effective in two scenarios: 1) We compare Amortized \model STaR (\cref{sec:amortized_model_star}), which conditions on the output variable to learn a better proposal distribution for training, against the standard (unconditioned) \model STaR; and 2) We evaluate \model s on a classification task, comparing the performance of unconstrained generation against a renormalized classifier that evaluates and normalizes the conditional probability of each possible output.
    \item {\bf (\cref{sec:structural-compliance}) Does the model achieve high type compliance?} A key theoretical result (\cref{sec:tac-star}) is that the soundness and near-optimality of the \model STaR optimization strategy rely on the model learning to comply with the workflow's type constraints (\emph{i.e.}, driving the partition function $\mathcal{Z}_{\vtheta} \rightarrow 1$). As type compliance increases, the gap between the tractable unnormalized likelihood and the true normalized likelihood ($\log \mathcal{Z}_{\vtheta}$) closes. We estimate how $\mathcal{Z}_{\vtheta}$ over \model STaR epochs to verify that this gap is negligible after training.
\end{itemize}
\subsection{Experiment Setup}

\label{sec:setup}

We provide an overview of our \model\,and baseline DSPy setups below:

\begin{itemize}[leftmargin=0pt,noitemsep,topsep=0pt]
\item {\bf \model s.} We parametrize \model\,adaptors to take the form of rank-$1$ LoRA models \citep{hu2022lora} on the attention weights, with $573,440$; $1,413,120$; and $958,464$ parameters per adaptor for \texttt{gemma-1.1-7b-it}, \texttt{gemma-2-27b-it} and \texttt{Qwen3-8B} respectively. For $\texttt{parse}$ and $\texttt{canon}$ implementations (\cref{sec:interface_ops}), we leverage the LangFun library, which prompts LLMs to generate Python classes and objects, and parses their responses. LoRA weights are initialized (`zero-init') following \citet{hu2022lora}.
\item {\bf DSPy.} We conduct prompt-optimizing baseline experiments under DSPy, with base models served on vLLM. We subclass \texttt{dspy.Signature} to represent training examples, with property names and types identical to their \model\, counterparts (some examples are listed in \cref{sec:dspy-setup}). We employ XGrammar \citep{dong2024xgrammar} for schema-based constrained decoding for all experiments. We implement two types of reasoning workflows for all tasks: 1) the native \texttt{dspy.ChainOfThought} module, and 2) an explicitly two-step composite module that resembles {\bf cot-cascade-structure} patterns under \model s. We experiment with various prompt optimization configurations under \texttt{dspy.MIPROv2} \citep{opsahl-ong-etal-2024-optimizing} and \texttt{dspy.BootstrapFewShotWithRandomSearch} \citep{khattab2024dspy}.
\end{itemize}

We conduct experiments of $5$ reasoning-heavy tasks, on subsets from datasets MGSM\footnote{The MGSM-SymPy task uses the same problems of MGSM, but additionally restrict the outputs to be rational expressions under SymPy. This variant was specifically included to test the framework's ability to generate and comply with highly structured, code-like output.} \citep{shi2022}, FinQA \citep{yang-etal-2018-hotpotqa}, HotPotQA \citep{yang-etal-2018-hotpotqa} and MuSR \citep{sprague2024musr}  respectively. 
Details of experiment setup are described in \cref{sec:setup-appendix}.
\subsection{Comparison against prompt-optimizing and untyped STaR baselines.}
\label{sec:effectiveness-against-prompt-optimization}

\Cref{tbl:prompt-optimization-comparison} lists MGSM, MGSM-SymPy, FinQA, and MuSR results from best-performing \model s and DSPy models.
In addition,  we compare the untyped (original) STaR against typed \model\,results on MGSM on Gemma models.

\paragraph{\model s are competitive against prompt-optimizing baseline methods.} We observe that \model s consistently and significantly outperform DSPy baselines in every setting. The performance gap is especially wide when 1) the base model is smaller, and 2) the task involves structured inputs (FinQA) or structured outputs (MGSM-SymPy).\footnote{We also compare between \model STaR-adapted and un-adapted models on the same LangFun prompts in \cref{sec:effectiveness-of-latent-structures}, and find that \model STaR consistently outperforms the un-adapted counterparts.}

\begin{figure}[ht]
    \centering
    \subfloat[FinQA]{
    \resizebox{.35\linewidth}{!}{
    \begin{tabular}{crr}
    \toprule
        Base Model & DSPy & \model  \\
        \midrule
        \texttt{gemma-1.1-7b-it}  & $0.7\%$ & $\mathbf{9.7\%}$ \\
        \texttt{gemma-2-27b-it}  & $12.7\%$ & $\mathbf{34.0\%}$ \\
        \texttt{Qwen3-8B}  & $12.0\%$ & $\mathbf{24.7\%}$ \\
    \bottomrule
    \end{tabular}
    }
    } \quad
    \subfloat[MuSR]{
    \resizebox{.35\linewidth}{!}{
    \begin{tabular}{crr}
    \toprule
        Base Model & DSPy & \model  \\
        \midrule
        \texttt{gemma-1.1-7b-it}  & $36.5\%$ & $\mathbf{62.6\%}$ \\
        \texttt{gemma-2-27b-it}  & $51.5\%$ & $\mathbf{65.0\%}$ \\
        \texttt{Qwen3-8B}  & $61.5\%$ & $\mathbf{63.7\%}$ \\
    \bottomrule
    \end{tabular}
    }
    } \quad
    \subfloat[MGSM]{
    \resizebox{.35\linewidth}{!}{
    \begin{tabular}{crrr}
    \toprule
        Base Model & DSPy & \model & STaR  \\
        \midrule
        \texttt{gemma-1.1-7b-it}  & $1.6\%$ & $\mathbf{27.3\%}$ & $10.5\%$ \\
        \texttt{gemma-2-27b-it}  & $81.9\%$ & $\mathbf{82.2\%}$ & $76.9\%$ \\
    \bottomrule
    \end{tabular}
    }
    \label{tbl:comparison-mgsm}
    } \quad
    \subfloat[MGSM-SymPy]{
    \resizebox{.35\linewidth}{!}{
    \begin{tabular}{crr}
    \toprule
        Base Model & DSPy & \model  \\
        \midrule
        \texttt{gemma-2-27b-it}  & $57.1\%$ & $\mathbf{75.9\%}$ \\
    \bottomrule
    \end{tabular}
    }
    }
    \caption{Comparison between best performing prompt-optimizing methods under DSPy and \model s (full results can be found in \cref{sec:per-language-mgsm-results,sec:tac-per-task-musr-results,sec:dspy-per-task-musr-results,sec:dspy-per-language-mgsm-results,sec:dspy-finqa-results}). We report the best DSPy result for each task.}
\label{tbl:prompt-optimization-comparison}
\end{figure}

\begin{figure*}[h]
    \centering
    \subfloat[Average estimate $\log \mathcal{Z}_{\vtheta}$ over validation set inputs versus \# of \model STaR epochs over MGSM languages. Note that later epochs (as early as epoch $5$) do not have samples from all languages, as some languages early-stopped.]{\includegraphics[width=.35\linewidth,valign=b]{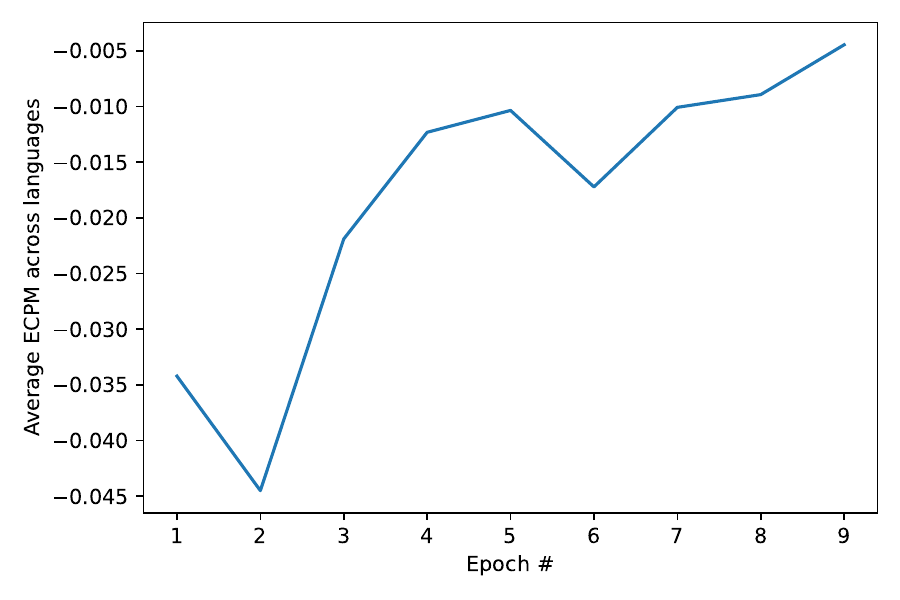}\label{fig:log-z-estimation}} \qquad
    \subfloat[Average MGSM training data parsing failure rate vs \# of epochs of \model STaR on \texttt{gemma-1.1-7b-it}. The pattern is {\bf cot-cascade-structure}.]{
    \begin{tabular}[b]{cr}
    \toprule
        At the end of epoch & Failure rate \\
        \midrule
        1 & $83.0\%$ \\
        2 & $1.0\%$ \\
        3 & $1.6\%$ \\
        4 & $0.4\%$ \\
        \bottomrule
    \end{tabular}
    \label{tbl:error-rate}
    }
    \caption{Type compliance during \model\,training.}
\end{figure*}

\paragraph{\model STaR compares favorably against the original STaR algorithm on unstructured data.} On the MGSM task (\cref{tbl:comparison-mgsm}), the original (untyped) STaR algorithm scored an average accuracy of $76.9$ and $10.5$ (from \texttt{gemma-2-27b-it} and \texttt{gemma-1.1-7b-it} respectively), lower than variants of reasoning \model\, patterns on the same dataset. This demonstrates that the structured, typed approach of \model s improves performance over the untyped STaR baseline.

\subsection{Flexible Posterior Inference Helps \model\,Performance.}

\label{sec:amortized-exp}

\begin{figure}[h]
\centering
\subfloat[Comparison between \model STaR and Amortized \model STaR on {\bf cot-cascade-structure} / \texttt{gemma-2-27b-it}.]{
\begin{tabular}[b]{lrr}
\toprule
Task   & \model STaR & Amortized \model STaR \\
\midrule
MGSM     & {82.2} & {82.4}  \\
FinQA & {36.0} & {41.7} \\
HotPotQA & {32.0} & {34.0} \\
\bottomrule
\end{tabular}
\label{tab:amortized-star-comparison}
} \quad
\subfloat[Comparison between classification and unconstrained generation results on MuSR.]{
\begin{tabular}[b]{p{.22\linewidth}rr}
\toprule
Base Model   & Cla. &  Gen. \\
\midrule
{\texttt{gemma-1.1-7b-it}}     & {62.6} & {62.1}  \\
{\texttt{gemma-2-27b-it}} & {65.0} & {51.6} \\
\bottomrule
\end{tabular}
\label{tab:musr-comparison}
}
\caption{Comparison between `default' and more informative inference methods.}
\label{tab:flexible-inference}
\end{figure}

\paragraph{Amortized inference at training time is effective.} The Amortized \model STaR algorithm (\cref{sec:amortized_model_star}) brings consistent improvement over vanilla \model STaR on $3$ tasks (\cref{tab:amortized-star-comparison}). Notably, the gains are most substantial on FinQA ($+5.7$ points). This suggests that amortized inference is particularly valuable for complex tasks where the initial sampling or fixed rationalization heuristics struggle to find valid latent traces, allowing the model to learn a more effective inference strategy.

\paragraph{Classification with renormalized posterior at inference time is effective.} We renormalize importance sampling estimates (\cref{eq:unnormalized-marginal-estimate}) to estimate the output label posterior $p_{\vtheta}(\mathbf{z}_2 \mid \mathbf{z}_1 )$ for the MuSR classification task, and output the label with highest probability. \Cref{tab:musr-comparison} shows that the renormalized-posterior classifier outperforms unconstrained generation on both \texttt{gemma-1.1-7b-it} and \texttt{gemma-2-27b-it} base models.

\subsection{\model\,models rapidly achieve high type compliance.}

\label{sec:structural-compliance}

We argued in \cref{sec:tac-star} that optimizing the unnormalized likelihood drives the model towards structural compliance.
The average MGSM parsing error rate during training (\cref{tbl:error-rate}) suggests that \model s learn compliance fast. %
We further empirically verify this by estimating the partition function $\mathcal{Z}_{\vtheta}$ \Dash which represents the total probability mass the model assigns to type-compliant outputs (the Estimated Compliant Probability Mass, ECPM) \Dash throughout training.
We estimate $\log \mathcal{Z}_{\vtheta}$ on the validation sets of the MGSM benchmark during training of the {\bf cot-cascade-structure} pattern on \texttt{gemma-1.1-7b-it}. We sample $100$ generations of entire traces without type-compliant masking per input with temperature $=1$, top-p $=1$, and top-k set to the vocabulary size. \Cref{fig:log-z-estimation} shows that the model rapidly learns to comply with the type constraints. The average $\log \mathcal{Z}_{\vtheta}$ approaches $-0.005$ by epoch 9, corresponding to an ECPM of $\exp(-0.005) \approx 99.5\%$, and thus confirms that the degree of misspecification $(1 - Z_{\vtheta})$ is negligible. Since the difference between unnormalized and normalized likelihood gradients is bounded by a multiplicative factor of $(1 - Z_{\vtheta})$ (\cref{thm:boundingmlegradients}), our empirical estimates imply that the difference is indeed small at the end of training, and \model STaR M-step (\cref{sec:tac-star}) approaches the true MLE update. Moreover, since $\log \mathcal{Z}_{\vtheta}$ is the difference between normalized and unnormalized likelihoods, the small magnitude suggests it is practical to do model selection with unnormalized likelihood directly, after a few epochs of training.

\section{Limitations and Future Work}

While the Type-Compliant Adaptation Cascades (\model s) framework demonstrates significant advantages in adapting LLMs for structured workflows, several limitations point toward important directions for future research.

\paragraph{Tractability and Theoretical Assumptions.} A primary limitation stems from the trade-off made for computational tractability in our training algorithms. Both \model STaR and Amortized \model STaR omit the gradient of the log-partition function ($\nabla_{\vtheta} \log \mathcal{Z}_{\vtheta}$) during the M-step (\cref{sec:tac-star}). While this approach is justified theoretically by \cref{thm:unnormalizedisgoodenough,thm:boundingmlegradients} and validated empirically in the work (\cref{sec:structural-compliance}), this justification may no longer hold for more difficult problems, or if we further reduce the adaptors' capacity. Future work could explore efficient methods for estimating or bounding the partition function gradient, or investigate advanced variational or contrastive training methods that relax the well-specified assumption.

\paragraph{Manual Workflow Design and Automated Synthesis.} Currently, the structure of a \model\,hypergraph \Dash the definition of intermediate types and the connections between transformations \Dash must be manually designed by the practitioner. While the declarative nature of \model s facilitates rapid iteration over different designs (as shown in \cref{sec:does-cascade-design-matter}), the framework does not yet support automatic structure discovery. This limitation presents a significant opportunity for automated workflow synthesis. Because a \model\,hypergraph is itself a formally defined, typed object, it is feasible to develop higher-level models trained to generate the necessary \model\,graph for a given task description. The foundational framework presented in this paper, which provides robust methods for adapting any given typed workflow via gradient-based optimization, is an essential prerequisite for enabling such a ``generate-then-execute'' paradigm.

\paragraph{Complexity and Latency of Type Constraints.} The \model\,framework relies on \texttt{parse} and \texttt{canon} functions to bridge the gap between LLM-generated strings and structured objects. For the common syntactic types used in our experiments (\emph{e.g.}, lists, expression trees), the overhead of parsing is negligible compared to LLM inference latency. However, the framework supports arbitrarily complex validation logic within the parse function, including semantic checks that might involve external calls (\emph{e.g.}, to a scoring model or validation API). While this enables stronger guarantees, complex semantic constraints could introduce significant latency, representing a trade-off that practitioners must navigate between execution speed and the strength of semantic guarantees.

\paragraph{Model and Task Generalizability.} Following previous work \cite{khattab2024dspy} in the LM programming literature, we validated \model s across several reasoning-heavy benchmarks. On two model sizes (7B and 27B), we obtained significant gains over strong prompt-optimization baselines. We also demonstrated applicability beyond QA through domain-specific code generation (MGSM-SymPy). To further substantiate the framework's generalizability, future work should expand evaluations to include a wider diversity of model architectures and broader task domains, such as creative tasks, interactive agents and multi-modal workflows, where the reliable, typed integration of LLMs and external tools is crucial. For example, \model s could provide a principled foundation for complex agentic systems, where we adapt models through unrolled tool-use feedback loops, with predictability guaranteed by the type-compliant nature of \model s. Another example is multimodal generation pipelines, where data of different modalities are processed and transformed in a trainable and programmatic workflow. We leave systematic exploration tu future work.

\section{Related Work}
\label{sec:related-work}

The challenge of adapting LLMs to complex problems involving structured workflows and type constraints intersects with several lines of research, including programmatic LM workflows, probabilistic programming, parameter-efficient fine-tuning, and constrained decoding. We defer a more extensive survey to \cref{sec:problem}.

\section{Conclusion}
\label{sec:conclusion}

We have presented Type-Compliant Adaptation Cascades (\model s), a novel probabilistic programming framework designed to empower ML practitioners to design trainable workflows that adapt to data. 
Our findings demonstrate that \model s' gradient-based learning paradigm is highly effective, consistently outperforming strong prompt-optimization baselines. Moreover, we also find flexible posterior inference of \model s at both training and inference time help with performance. We also find that empirically, the model learns to comply with type constraints fast in training, justifying the assumptions in our theoretical results.
These results underscore the versatility and efficacy of \model s as a scalable paradigm for adapting to complex, reasoning-heavy tasks.

\onecolumn 

\bibliographystyle{unsrtnat}
\bibliography{iclr2025_conference}
\clearpage
\appendix
\section*{Appendices}

\begin{table*}[ht]
    \centering
    \begin{tabular}{p{.48\linewidth}p{.48\linewidth}}
    \toprule
      Program View & Probabilistic View \\
        \midrule
         $\tau$-typed object & Random variable $\in \Sigma^*$ restricted to strings $\in \mathrm{valid}(\tau)$ \\
         LM adaptor with weights $\vtheta$, with output restricted to $\tau$-typed objects & Unnormalized conditional distribution $p_{LM}(\mathbf{z}_{t} \mid \mathbf{z}_s ; \vtheta) \mathbb{I}(\mathbf{z}_{t} \in \mathrm{valid}(\tau))$ \\
         Deterministic algorithm $f: \tau_i \rightarrow \tau_o$ & Degenerate distribution $\delta_{\texttt{canon}(f(\texttt{parse}(x, \tau_i)))}(y)$ \\
         \texttt{parse} and \texttt{canon} functions that convert typed objects to/from LM inputs/outputs & Measurable maps between object domain $\mathcal{O}$ and string domain $\Sigma^*$ \\
         Executing a workflow to obtain $\mathbf{z}_{1 \ldots M}$ & Sampling from joint unnormalized probability $\tilde{p}_{\vtheta}(\mathbf{z}_{1 \ldots M}) = \prod_{k} \tilde{p}_{\vtheta}(\mathbf{z}_{T_k} \mid \mathbf{z}_{S_k}) $\\
         Probability that a stochastic workflow succeeds & $\mathcal{Z}_{\vtheta} = \mathrm{Pr}_{p_{\vtheta}}(\text{all nodes are valid})$ \\
        \bottomrule
    \end{tabular}
    \caption{Dual semantics: how \model\,concepts map between their program and probabilistic views.}
    \label{tab:dual-semantics}
\end{table*}

\section{Background and Related Work}
\label{sec:problem}

\paragraph{Programmatic LM workflows.}
A large body of work exposes LMs through \emph{programmed} pipelines as typed or templated modules, with declarative constraints and optimizers, such as DSPy \citep{khattab2022demonstrate,khattab2024dspy}, LMQL \citep{lmql}, and LangChain \citep{Chase_LangChain_2022}. These systems typically \emph{specify} structure and then tune prompts or few-shot exemplars. They do not cast the entire workfrlow as a single probabilistic object with learnable continuous parameters, and a likelihood objective. While there have been proposals that optimized weights under such programmatic pipelines (such as \texttt{BetterTogether} \citep{soylu-etal-2024-fine}),
\model s differs fundamentally in its principled yet optimization-friendly probabilistic formulation, which enables both theoretically justified training methods (\cref{sec:tac-star}) and advanced inference techniques (\cref{sec:amortized_model_star}).

\paragraph{Probabilistic programming and structured prediction.} Probabilistic programming languages tailored for machine learning, such as Edward \citep{tran2017deep} and Pyro \citep{bingham2019pyro}, combine differentiable components with stochastic control flow. On the other hand, classical structured prediction \citep{crf,belanger} provides tools for handling global constraints in unnormalized models. Our formulation connects these threads to LM workflows: each typed hyperedge is an \emph{unnormalized} conditional whose {\bf type compliance} functions as a partition function term $\mathcal{Z}_{\vtheta} \leq 1$. This distinct perspective allows us to train with a tractable objective, whose bias vanishes as type compliance rises.

\paragraph{Problem-solving strategies and adapting for reasoning.}
Techniques like Chain-of-Thought (CoT) \citep{Wei2022ChainOT} and Self-Refine \citep{selfrefine} leverage prompting to elicit intermediate problem-solving steps or iterative improvements from LMs, often boosting performance on complex tasks. Methods such as STaR \citep{zelikman2022} and ReFT
\citep{trung-etal-2024-reft} further adapted the LM to reason. We adopt the spirit of STaR, but place it inside a hypergraph to propose typed and multi-step rationalizations (\cref{sec:tac-star}). We also introduce an amortized variant that learns to propose rationalizations, rather than relying solely on heuristics (\cref{sec:amortized_model_star}).

\paragraph{Constrained and schema-aware decoding.} To improve output reliability, various methods enforce grammar-based constraints during LLM generation \citep{poesia2022synchromesh,geng-etal-2023-grammar,mccarthy-etal-2023-long,10.5555/3692070.3692216,geng2025jsonschemabench} have been proposed. These methods generally modify \emph{local} conditional distributions over next tokens, to mask out continuations that are incompatible with the given input and grammar. In contrast, our objective learns parameters so that type-compliant trajectories carry increasing probability mass globally, improving validity and task accuracy.

\paragraph{Parameter-efficient adaptation.}
LoRA and related PEFT methods \citep{pmlr-v97-houlsby19a,hu2022lora,li-liang-2021-prefix,lester-etal-2021-power,liu-etal-2022-p} enable light-weight adaptation. We use small adaptors to highlight data-efficiency and show that gains stem from \emph{typed workflow learning} rather than sheer capacity.

\paragraph{Connection to Reinforcement Learning.} The \model STaR training procedure (\cref{sec:tac-star}) can be viewed through the lens of policy optimization. As \citet{zelikman2022} observed, the STaR objective closely resembles the REINFORCE algorithm \citep{Williams92}. Similarly, the M-step in the \model STaR algorithm can be interpreted as optimizing the \model\,workflow policy under REINFORCE, where a binary reward is assigned upon successfully generating the correct output.

We adopt the MC-EM framing as it provides a principled approach for likelihood maximization in the presence of annotated output data. While more advanced RL techniques (\emph{e.g.}, PPO \citep{schulman2017proximalpolicyoptimizationalgorithms} or actor-critic methods \citep{NIPS1999_6449f44a}) work with non-binary reward functions, they often introduce complexity, such as training value functions, which are difficult to estimate over complex, typed latent spaces. Furthermore, the exploration challenge often faced by policy gradient methods in sparse reward settings is significantly mitigated by both the rationalization heuristic and the inference network in Amortized \model STaR (\cref{sec:amortized_model_star}) in the E-step. This mechanism effectively guides the sampling process towards successful trajectories using the known outputs \Dash a technique specific to this supervised adaptation context.

\section{Additional Studies on Workflow Pattern Design}

\label{sec:additional-studies-pattern-design}

\begin{figure}[h!]
    \centering
\subfloat[{\bf direct}]{\centering
\begin{tikzpicture}[>=Stealth]

    \node[observed, align=center] (x) {$\mathbf{z}_1$ \\ type: $\tau_i$};
    \node[observed, align=center, below=2cm of x] (y) {$\mathbf{z}_2$ \\ type: $\tau_o$};

    \draw[->] (x) edge node[above] {$(\tau_i, \tau_o, \vtheta_1)$}  (y);

\end{tikzpicture}
\label{fig:single-adaptor-tac}} \quad
    \subfloat[\centering {\bf cot-type-structure}]{
    \label{fig:cot-type-structure}
    {
    \begin{tikzpicture}[>=Stealth]

    \node[observed, align=center] (x) {$\mathbf{z}_1$ \\ type: $\tau_i$};
    \node[latent, align=center, below=2cm of x] (oprime) {$\mathbf{z}_3$ \\ type: $\tau_{ro}$};
    \node[observed, align=center, below=2cm of oprime] (y) {$\mathbf{z}_2$ \\ type: $\tau_o$};

    \draw[->] (x) edge node[above] {$(\tau_i, \tau_{ro}, \vtheta_2)$}  (oprime);
    \draw[->] (oprime) edge node[above] {$\mathrm{extract}$: $\tau_{ro} \rightarrow \tau_o$} (y);
\end{tikzpicture}
    }
    }%
\caption{
    Workflow patterns experimented in this paper, with increasing structural complexity from left to right.  In the most complicated pattern {\bf expression-cascade-structure} we illustrate the workflow with example node values. Dashed-boundary nodes indicate variables that are not observed at training time. And solid-boundary nodes indicate nodes with training time observable values.
    A main message of this work is that instead of defining workflows imperatively as fixed-parameter systems, {\bf we treat an entire typed workflow as a single probabilistic program, whose parameters are lightweight PEFT modules, allowing end-to-end training with latent variables.} }
    \label{fig:tac-patterns-2}
\end{figure}

In this section, we conduct additional experiments that vary the pattern structures, and evaluate how such changes affect performance. Specifically, we would like to answer the following questions:
\begin{itemize}[noitemsep,topsep=0pt]
    \item {\bf (\cref{sec:effectiveness-of-latent-structures}) Is adaptation with reasoning workflows effective?} The \model\,framework gives practitioners great freedom in designing a workflow that reason in the process. We hypothesize that adapting with such explicit structures improves performance on tasks that require complex reasoning.
    \item {\bf (\cref{sec:does-cascade-design-matter}) How do \model\,design variations affect performance?}
    We evaluate how such \model\,design variations for the same task affect performance.
\end{itemize}

\subsection{End-to-end trainable workflows as \model s.}

The declarative and flexible nature of \model s enable practitioners to rapidly implement end-to-end trainable workflows. We implement some common patterns as \model s:
\begin{itemize}[noitemsep,topsep=0pt]
    \item {\bf Direct adaptation} of an LM to the downstream task without any latent structure corresponds to common supervised PEFT methods surveyed in \cref{sec:problem}. The {\bf direct} pattern (\cref{fig:single-adaptor-tac}) is a singleton \model\,with no latent nodes.
    \item {\bf Adapting with latent rationales} corresponds to patterns that learn to generate rationales for the task at hand \cite{zelikman2022}. There are several possible \model\,structure designs that incorporate rationales: for example, {\bf cot-type-structure} (\cref{fig:cot-type-structure}) maps the input to a rationale-output typed object, from which the task output is deterministically extracted. Alternatively, {\bf cot-cascade-structure} (\cref{fig:cot-cascade-structure}) introduce rationales as distinct nodes in the \model\,hypergraph, which transforms into the task output under an adaptor.
    \item {\bf Trainable self-refinement} refers to an end-to-end trainable variant of self-refine \citep{selfrefine}, where the model first sketches a task output, and iteratively refine it. Without \model , a practitioner would have to resort to manually writing tedious postprocessing functions for the intermediate results. On the other hand, the  \model\,counterpart {\bf refine-structure} (\cref{fig:multiple-adaptor-tac-refinement} in \cref{sec:model-figures}) is straightforward.
\end{itemize}

For the MGSM-SymPy task, we experiment with the {\bf expression-cascade-structure} pattern (\cref{fig:cascade}), which additionally imposes the constraint that the output must be a rational number represented by an arithmetic expression tree. Such type constraints often reflect business logic (for example, we expect the MGSM dataset to have rational number answers), and may be necessary when the \model\,forms a component in a larger system. 

\subsection{Effectiveness of Adaptation with Reasoning Workflows}
\label{sec:effectiveness-of-latent-structures}

To evaluate whether adaptation with reasoning workflows is effective, we compare {\bf cot-cascade-structure}, and {\bf refine-structure} \model s against {\bf direct} on the $3$ tasks MGSM, FinQA and HotPotQA, on base models \texttt{gemma-2-27b-it} and \texttt{gemma-1.1-7b-it}. \cref{tab:no-structure-cot-comparison} shows that both {\bf cot-cascade-structure}  significantly outperforms {\bf direct} on MGSM and FinQA on both \texttt{gemma-2-27b-it} and \texttt{gemma-1.1-7b-it}. But {\bf cot-cascade-structure} slightly underperforms {\bf direct} on HotPotQA. These results largely agree with the meta study done by \cite{sprague2024cotcotchainofthoughthelps}, which also reported that tasks that require arithmetic and symbolic reasoning, such as MGSM and FinQA, benefit the most from CoT, while a huge portion of previous work saw that CoT degrades performance for multihop QA. However, we note that the {\bf refine-structure} \model\, (\cref{fig:multiple-adaptor-tac-refinement}) consistently outperform the {\bf direct} baseline in all $3$ tasks on \texttt{gemma-2-27b-it}, showcasing the effectiveness of the adaptive refinement paradigm.

\begin{table*}[h]
\centering
{
\begin{tabular}{lrrrrrr}
\toprule
 & \multicolumn{3}{c}{\texttt{gemma-2-27b-it}} & \multicolumn{2}{c}{\texttt{gemma-1.1-7b-it}} \\
 \cmidrule(lr){2-4} \cmidrule(lr){5-6}
Dataset  & direct             & cot-cascade-structure & refine-structure & direct             & cot-cascade-structure \\
\midrule
MGSM     & {24.7} & {\bf 82.2} & 78.6 & 5.1 & {\bf 27.3}  \\
FinQA    & 17.3                     & {\bf 36.0}  & 23.7 & 3.0 &   {\bf 9.7}       \\
HotPotQA & 34.0                       & 32.0 & {\bf 39.0}        & \Dash & \Dash              \\
\bottomrule
\end{tabular}
}
\caption{Comparison between {\bf direct} and reasoning workflows. For the MGSM dataset, we report per-language accuracies in \cref{tab:mgsm-numbers}. The difference between best performing runs and {\bf direct} are statistically significant/marginally significant: for MGSM and FinQA $p < 0.05$ (both \texttt{gemma-2-27b-it} and \texttt{gemma-1.1-7b-it}), and for HotPotQA $p = 0.07$ under paired permutation tests. Per-language accuracy numbers of the MGSM dataset are in \cref{sec:per-language-mgsm-results}.}
\label{tab:no-structure-cot-comparison}
\end{table*}

\paragraph{Task adaptation with \model STaR is effective.} To evaluate whether the efficacy of \model s can be attributed to our proposed \model STaR method, we also compare adapted \model\,workflows against those with the same hypergraph structure, but with un-adapted weights (\emph{i.e.}, all adaptors in the \model\,use base model weights). Both \model STaR trained and un-adapted models use the same structured LangFun prompts that are similar to examples listed in \cref{sec:prompts-generated-by-langfun}.
The significant gap between adapted and un-adapted results in \cref{tab:no-train-comparison} indicate that the \model STaR algorithm is effective. Notably, un-adapted models still outperform {\bf direct} workflows (listed in \cref{tab:no-structure-cot-comparison}), indicating that LangFun's type-inducing prompts can invoke somewhat effective test-time computation over the \model\,hypergraph structure.

\begin{table}[h]
\centering
{
\begin{tabular}{llrrr}
\toprule
Task   & Structure & \model STaR & Un-adapted \\
\midrule
MGSM     & {\small cot-cascade-structure} & {82.2} & {45.4}  \\
MGSM     & {\small cot-type-structure} & {80.4} & {74.7}  \\
MGSM-SymPy     & {\small expression-cascade-structure} & {75.9} & {69.5}  \\
FinQA & {\small cot-cascade-structure} & {36.0} & {13.0}  \\
HotPotQA & {\small refine-structure} & {39.0} & {24.0}  \\
\bottomrule
\end{tabular}
}
\caption{Comparison between \model STaR-adapted and un-adapted \texttt{gemma-2-27b-it}. The differences are all statistically significant ($p < 0.05$) under paired permutation tests.}
\label{tab:no-train-comparison}
\end{table}

\subsection{Effects of Different \model\,Designs}

\label{sec:does-cascade-design-matter}

\paragraph{Decoupling rationale and output modeling helps performance.} {\bf cot-cascade-structure} (\cref{fig:cot-cascade-structure}) achieves a higher score than {\bf cot-type-structure} (\cref{fig:cot-type-structure}) on the MGSM task (\cref{tab:engineering-comparison}), suggesting that modeling the rationale and task output generation with distinct adaptors helps performance.
By using distinct adaptors, the workflow allows specialization: the first adaptor focuses on reasoning, while the second specializes in synthesis, reducing the complexity burden on a single monolithic step.
The positive result again highlights how the \model\,formalism can help practitioners iterate and experiment with different multi-adaptor cascade designs, which would be tedious otherwise.

\paragraph{Robustness to Semantic Constraints.} Comparing performance on MGSM and the more constrained MGSM-SymPy task reveals a key advantage of the \model\, framework's robustness. As shown in \cref{tab:engineering-comparison}, the best-performing \model\, model sees a modest performance drop, from $82.2\%$ on MGSM to $75.9\%$ on MGSM-SymPy, when required to generate a valid symbolic expression.\footnote{Sample expressions generated under {\bf expression-cascade-structure} are listed in \cref{sec:expressions-from-ecs}.} This contrasts sharply with the prompt-optimizing baseline (\cref{tbl:prompt-optimization-comparison}). The best DSPy configuration experiences a much more significant degradation, plummeting from $81.9\%$ on MGSM to just $57.1\%$ on MGSM-SymPy. The substantially smaller performance drop for \model s underscores the brittleness of discrete prompt optimization when faced with strict structural requirements. The \model\, framework's gradient-based adaptation within a typed system proves to be significantly more resilient, making it a more reliable paradigm for tasks demanding structural compliance.

\begin{table}[h]
\centering
{
\begin{tabular}{p{.2\linewidth}p{.3\linewidth}p{.3\linewidth}}
\toprule
\multicolumn{2}{c}{MGSM} & {MGSM-SymPy} \\
\cmidrule(lr){1-2} \cmidrule(lr){3-3}
 {\bf cot-type-structure} & {\bf cot-cascade-structure} & {\bf expression-cascade-structure} \\
\midrule
{80.4} & {82.2} & {75.9} \\
\bottomrule
\end{tabular}
}
\caption{Effects of different \model\, designs on the MGSM dataset, demonstrating the impact of workflow structure on performance. The {\bf cot-cascade-structure} (which decouples rationale generation from the final answer synthesis) outperforms the monolithic {\bf cot-type-structure}. The {\bf expression-cascade-structure} result shows strong performance on the more constrained MGSM-SymPy task.}
\label{tab:engineering-comparison}
\end{table}

\section{Algorithms}
\label{sec:algorithms}

\subsection{Forward and Backward}

\Cref{alg:cascade-forward} (\texttt{forward}) executes the probabilistic program represented by a \model\,$C = (\mathbf{Z}, \mathbf{E})$. Starting from a given input node value $\mathbf{z}_1^*$, the algorithm traverses the hypergraph following a topological order, and terminates when all edges $\in \mathbf{Z}$ have been visited. \texttt{forward} takes $C$ and $\mathbf{z}^*_1$ as input arguments. \texttt{forward} also takes the following as arguments:
\begin{itemize}
    \item sampler configuration $\kappa$ for different sampling techniques, \emph{e.g.}, varying temperature, nucleus, and top-$k$ sampling
    \item maximum number of sampling attempts
\end{itemize}

        \begin{algorithm}[h]
            \caption{\model\, Forward Algorithm (\texttt{forward})}
            \label{alg:cascade-forward}
            \begin{algorithmic}[1]
                \Require  \model\, cascade $C = (\mathbf{Z}, \mathbf{E})$ where $\mathbf{Z} = \{ \mathbf{z}_1 \ldots \mathbf{z}_M \}$ and $\mathbf{E} = \{ \mathbf{e}_1 \ldots \mathbf{e}_K \}$, input object: $\mathbf{z}^*_1$, sampler configuration $\kappa$, $N_{\text{max}}$ for maximum number of sampling attempts.
                \Ensure Sampled values $(\mathbf{z}_2^*, \ldots, \mathbf{z}_M^*)$.
                \State Determine a topological ordering of edges in $\mathbf{E}$. Let the sorted hyperedges be ${e}'_1 \ldots {e}'_K$.
                \State $\mathbf{Z}^*_{\text{already\_sampled}} \leftarrow \{ \mathbf{z}_1^* \}$.
                \For{$k \in [1 .. K]$}
                    \State Assert the source nodes of $e'_k$ is a subset of $\mathbf{z}_{\text{already\_sampled}}$.
                    \If{$e'_k = (\tau_i, \tau_o, \vtheta)$ is a type-constrained LM adaptor}
                        \State {\# type-constrained LM adaptors have a single source node and a single target node.}
                        \State $\vx \leftarrow$ canonicalized representation of $e'_k$'s source node.
                        \While{number of attempts $\leq N_{\text{max}}$}
                        \State Try draw $\vy \sim p_{LM}(\cdot \mid \vx ; \vtheta, \kappa)$
                        \If{$\texttt{parse}(\vy, \tau_o) \neq \text{error}$}
                        \State $t \leftarrow$ index of $e'_k$'s target node.
                        \State $\mathbf{z}^*_t \leftarrow \vy$
                        \State $\mathbf{Z}^*_{\text{already\_sampled}} \leftarrow \mathbf{Z}_{\text{already\_sampled}} \cup \{ \mathbf{z}^*_t \}$
                        \State break
                        \EndIf
                        \EndWhile
                    \ElsIf{$e'_k$ is a deterministic algorithm $f$}
                        \State {\bf \# In this work we assume $f$'s inputs and outputs are sorted by node index in $C$.}
                        \State $\mathbf{O}_{f \text{input}} \leftarrow$ parsed objects of $e_k'$'s source nodes, sorted by node index.
                        \State $\mathbf{O}_{f \text{output}} \leftarrow f(\mathbf{O}_{f \text{input}})$
                        \State $\mathbf{Z}^*_{f \text{output}} \leftarrow $ canonicalized representations of objects $\in \mathbf{O}_{f \text{output}}$, sorted by node index.
                        \State $\mathbf{Z}_{\text{already\_sampled}} \leftarrow \mathbf{Z}^*_{\text{already\_sampled}} \cup \mathbf{Z}^*_{f \text{output}}$
                    \EndIf
                \EndFor
                \State \Return $\mathbf{Z}_{\text{already\_sampled}} - \{ \mathbf{z}^*_1 \}$.
            \end{algorithmic}
        \end{algorithm}

\Cref{alg:cascade-backward} (\texttt{backward}) takes as input $(C, \mathbf{Z}^*)$, where $C = (\mathbf{Z}, \mathbf{E})$ where $\mathbf{E} = (e_1 \ldots e_K)$ is a \model , and $\mathbf{Z}^*$ are value assignments of $\mathbf{Z}$. We assume the $\log$ probability $p_{LM}(\vy \mid \vx ; \vtheta_{k})$ is auto-differentiable with regard to all adaptor hyperedges in a \model . \Cref{alg:cascade-backward} returns unnormalized $\log$ joint probabilities of $\mathbf{Z}^*$ under $C$: $\log \tilde{p}_{\vtheta}(\mathbf{Z}^*)$, the per-node generation $\log$ probabilities $(\log p_{\vtheta}(z_2 \mid \cdot) \ldots \log p_{\vtheta}(z_M \mid \cdot))$, and also gradients of LM adaptors: $\nabla_{\vtheta_k} \log \tilde{p}_{\vtheta}(\mathbf{Z}^*)$ for adaptor hyperedges' indices $k$. We note that \texttt{backward} is easily parallelizable: all adaptor edges can be processed at the same time.

        \begin{algorithm}[h]
            \caption{\model\,Backward Algorithm (\texttt{backward})}
            \label{alg:cascade-backward}
            \begin{algorithmic}[1]
                \Require $C = (\mathbf{Z}, \mathbf{E})$ and sample $\mathbf{Z}^* = \{\mathbf{z}_1^*, \mathbf{z}_2^*, \ldots, \mathbf{z}^*_{M}\}$
                \Ensure $(\log \tilde{p}_{\vtheta}(\mathbf{Z}^*)$, $(\log p_{\vtheta}(z_2 \mid \cdot) \ldots \log p_{\vtheta}(z_M \mid \cdot))$, $\{\nabla_{\vtheta_k} \log \tilde{p}_{\vtheta}(\mathbf{Z}^*) \mid e_k \in \mathbf{E} \text{ is an adaptor hyperedge} \})$
                \State Initialize $\log$-probability accumulator $\mathcal{L} \leftarrow 0$.
                \For{each LM adaptor hyperedge $e_k = (\tau_i, \tau_o, \vtheta_k)$}
                    \State Let $\mathbf{z}^*_i \in \mathbf{Z}^*$, $\mathbf{z}^*_o \in \mathbf{Z}^*$ be the sample value of $e_k$'s input and output nodes $\mathbf{z}_i$ (typed $\tau_i$) and $\mathbf{z}_o$ respectively.
                    \State $(\ell, \mathbf{g}_k) \leftarrow \texttt{peft\_backward}(\log p_{LM}(\mathbf{z}^*_o \mid \texttt{canon}(\texttt{parse}(\mathbf{z}^*_i, \tau_i)); \vtheta )$.
                    \State $\mathcal{L} \leftarrow \mathcal{L} + \ell$
                    \State keep track of $\ell$ by its node index.
                \EndFor
                \State {\bf \# For nodes from deterministic hyperedges, set log prob to $0$ as they have no learnable parameters.}
                \State \Return $\left(\mathcal{L}, (\log p_{\vtheta}(z_2 \mid \cdot) \ldots \log p_{\vtheta}(z_M \mid \cdot)), \{\mathbf{g}_k \mid e_k \in \mathbf{E} \text{ is an adaptor hyperedge} \} \right)$.
            \end{algorithmic}
        \end{algorithm}

\subsection{\model STaR}

\label{sec:appendix-tac-star}

The \model STaR algorithm (\cref{alg:model-star}) takes as input $(C, \{ x_i^*, y^*_i \mid i \in [1 .. \mathcal{D}_{\text{train}}] \})$, where $C$ is the \model\,to train, and $\{ (x_i^*, y^*_i) \mid i \in [1 .. \mathcal{D}_{\text{train}}] \}$ is the training dataset. As we described in \cref{sec:tac-star}, \model STaR uses a `fallback \model ' heuristics in hope to obtain a sample when the forward algorithm fails.

\paragraph{Building Fallback \model.} Given a \model\,$C = (\mathbf{Z}, \mathbf{E})$ with input node and output node typed $\tau_i$ and $\tau_o$ respectively, we build its fallback \model\, ${C}_{\text{fallback}} = (\mathbf{Z}', \mathbf{E}')$ (denoted as the function \texttt{build\_fallback} in \cref{alg:model-star}) as follows:
\begin{itemize}
    \item The input node of ${C}_{\text{fallback}}$: $\mathbf{z}'_1$ is of the product type $\tau_{io} = \tau_i \times \tau_o$, representing a data container that holds one object of type $\tau_i$ and another object of type $\tau_o$.
    \item All other nodes $\in \mathbf{Z}$ have their counterpart nodes in $Z'$ (with the same types and indices).
    \item We copy each hyperedge $e \in \mathbf{E}$ over to $\mathbf{E}'$, connecting nodes with the same indices. In the case that $e$ is a deterministic algorithm hyperedge, and has $\mathbf{z}_1$ as one of its source nodes, we modify the counterpart hyperedge $e'$ to have a deterministic algorithm that first extracts the original object $\texttt{parse}(\mathbf{z}_1)$ from $\texttt{parse}(\mathbf{z}'_1)$, and then pass $\texttt{parse}(\mathbf{z}_1)$ to the original algorithm as input.
\end{itemize}
Adaptors in ${C}_{\text{fallback}}$ use no-op weights, falling back to the behavior of the base model. We denote such no-op weights as $\vtheta_0$. For example, \cref{fig:fallback-tac} is the ${C}_{\text{fallback}}$ for \cref{fig:cot-type-structure}.

\begin{algorithm}[h]
\caption{\model STaR Training Algorithm}
\label{alg:model-star}
\begin{algorithmic}[1]
\Require Training pairs $\mathcal{D}_{\text{train}} = \{ (x^*_i, y^*_i) \mid i \in [1 .. |\mathcal{D}_{\text{train}}|] \}$, \model\, $C$, sampler configuration $\kappa$.
\State ${C}_{\text{fallback}} \leftarrow \text{build\_fallback}(C)$
\For{epoch in $[1 .. \texttt{num\_epochs}]$}
\State $S \leftarrow \{ \}$ {\bf \# Successful samples}
\For{training pair $(x^*, y^*) \in \mathcal{D}_{\text{train}}$}
    \State $\mathbf{z}^*_1 \leftarrow \texttt{canon}(x^*)$
    \State \textbf{\# E-step (Sampling Latent Variables):}
    \State $({\mathbf{\hat{z}}}_2 \ldots {\mathbf{\hat{z}}}_M) \leftarrow \text{Forward}(C, \mathbf{z}^*_1)$.
    \State \textbf{\# Filtering (Validity Check):}
    \State Initialize \texttt{error\_flag} $\leftarrow$ false.
    \State Set \texttt{error\_flag} $\leftarrow$ true if errors in E-step or $\texttt{parse}({\mathbf{\hat{z}}}_2)  \neq y^*$.
    \State \textbf{\# Heuristics Fallback (Addressing Forward Failure):}
    \If{\texttt{error\_flag} is true}
        \State $\mathbf{z}'^*_1 \leftarrow \texttt{canon}((x^*, y^*))$
            \State $({\mathbf{\hat{z}}}'_2 \ldots {\mathbf{\hat{z}}}'_M) \leftarrow \texttt{forward}({C}_{\text{fallback}}, \mathbf{z}'^*_1)[0]$.
            \If{no error was raised and $\texttt{parse}({\mathbf{\hat{z}}}'_2) = y^*$}
                \State $({\mathbf{\hat{z}}}_2 \ldots {\mathbf{\hat{z}}}_M) \leftarrow ({\mathbf{\hat{z}}}'_2 \ldots {\mathbf{\hat{z}}}'_M)$
                \State Set \texttt{error\_flag} $\leftarrow$ false.
            \EndIf
    \EndIf
    \If{\texttt{error\_flag} is false}
        \State $S \leftarrow S \cup \{ (\mathbf{z}^*_1, {\mathbf{\hat{z}}}_2 \ldots {\mathbf{\hat{z}}}_M) \}$
    \EndIf
\EndFor
\State \textbf{\# M-step (Parameter Update):}
\For{$(\mathbf{z}^*_1, {\mathbf{\hat{z}}}_2 \ldots {\mathbf{\hat{z}}}_M) \in S$}
    \State $\mathbf{G} \leftarrow \texttt{backward}(C, (\mathbf{z}^*_1, {\mathbf{\hat{z}}}_2 \ldots {\mathbf{\hat{z}}}_M))[2]$
    \State $\texttt{optimize}(C, \mathbf{G})$
\EndFor
\EndFor
\end{algorithmic}
\end{algorithm}

\subsection{Amortized \model STaR}

The Amortized \model STaR algorithm (\cref{alg:amortized-model-star}) builds upon \cref{alg:model-star} to introduce an inference network \model . While ${C}_{\text{fallback}}$ used fixed no-op weights that behave identical to the base language model, Amortized \model STaR leverages an inference network \model\,$C'$ with trainable parameters.

\paragraph{Building the inference network $C'$.} Given a \model\,$C = (\mathbf{Z}, \mathbf{E})$ with input node and output node typed $\tau_i$ and $\tau_o$ respectively, we build the adaptive fallback \model ${C}' = (\mathbf{Z}', \mathbf{E}')$ (denoted as the function \texttt{build\_infer\_net} in \cref{alg:amortized-model-star}). At a high level, every adaptor hyperedge that generates latent variables in $C$ is mapped into a counterpart in $C'$ that also depends on both observed a $\tau_i$-typed input and a $\tau_o$-typed output, now encoded as $\mathbf{z}'_1$, typed $\tau_{io}$. Specifically we build $C'$ with the following procedure:
\begin{itemize}
    \item The input node of ${C}'$: $\mathbf{z}'_1$ is of the product type $\tau_{io} = \tau_i \times \tau_o$, as with \texttt{build\_fallback}.
    \item All nodes $\in \mathbf{Z}$ have their counterpart nodes in $Z'$ (with the same types and indices), except for $\{ \mathbf{z}_1, \mathbf{z}_2 \}$.\footnote{We arbitrarily designate a node $\in \mathbf{Z}'$ that does not have an outgoing hyperedge as the output node for syntactic conformity.}
    \item For each hyperedge $e \in \mathbf{E}$,
    \begin{itemize}
        \item In the case that $e$ is a deterministic algorithm hyperedge, and has $\mathbf{z}_1$ as one of its source nodes, we add a counterpart hyperedge $e'$ that connect counterpart nodes in $\mathbf{Z}'$, with its deterministic algorithm modified to typecheck, as \texttt{build\_fallback}. 
        \item Otherwise, $e$ is an adaptor hyperedge. Denoting its source node as $\mathbf{z}_s$ and target node as $\mathbf{z}_t$:
    \begin{itemize}
        \item If $\mathbf{z}_t = \mathbf{z}_2$, we continue since $\mathbf{z}_t$ has no counterpart $C'$.
        \item If $\mathbf{z}_s = \mathbf{z}_1$ and $\mathbf{z}_t \neq \mathbf{z}_2$, we add a counterpart hyperedge $e' = (\tau_s, \tau_t, \vtheta_{\text{new}})$ connecting counterpart nodes $\mathbf{z}'_s$ and $\mathbf{z}'_t$. $\vtheta_{\text{new}}$ indicates the parameter vector of a new LM adaptor.
        \item Otherwise, $\mathbf{z}_s \neq \mathbf{z}_1$ and $\mathbf{z}_t \neq \mathbf{z}_2$. In this case, we create $e'$ to be an adaptor that is conditioned on both $\mathbf{z}'_s$ and $\mathbf{z}'_1$. To achieve this goal, we introduce into $C'$ a helper node $\mathbf{z}''_s$ typed $\tau_{ios} = \tau_i \times \tau_o \times \tau_s $, and a helper hyperedge $e''$ that has source nodes $\{ \mathbf{z}'_1, \mathbf{z}'_s \}$, and target node $\{ \mathbf{z}''_s \}$. $e''$ is a deterministic edge that combines values in $\mathbf{z}'_1$ and $\mathbf{z}'_s$ into the $3$-object container $\mathbf{z}''_s$. Finally, we add $e'$ that connects $\mathbf{z}''_s$ to $\mathbf{t}$ as the adaptor transformation $(\tau_{ios}, \tau_t, \vtheta_{\text{new}})$, where  $\vtheta_{\text{new}}$ again indicates the parameter vector of a new LM adaptor.
    \end{itemize}
    \end{itemize}
\end{itemize}

Adaptors in $C'$ are new adaptors. And we train $C$ alternately with $C'$ in \cref{alg:amortized-model-star}. The algorithm to train $C'$ is listed in \cref{alg:update-amortized}.

\begin{algorithm}[h]
\caption{Amortized \model STaR Training Algorithm}
\label{alg:amortized-model-star}
\begin{algorithmic}[1]
\Require Training pairs $\mathcal{D}_{\text{train}} = \{ (x^*_i, y^*_i) \mid i \in [1 .. |\mathcal{D}_{\text{train}}|] \}$, \model\, $C$, sampler configuration $\kappa$.
\State ${C}' \leftarrow \text{build\_infer\_net}(C)$
\For{epoch in $[1 .. \texttt{num\_epochs}]$}
\State $S \leftarrow \{ \}$ {\bf \# Successful samples}
\For{training pair $(x^*, y^*) \in \mathcal{D}_{\text{train}}$}
    \State $\mathbf{z}^*_1 \leftarrow \texttt{canon}(x^*)$
    \State \textbf{\# E-step (Sampling Latent Variables):}
    \State $({\mathbf{\hat{z}}}_2 \ldots {\mathbf{\hat{z}}}_M) \leftarrow \text{Forward}(C, \mathbf{z}^*_1)$.
    \State \textbf{\# Filtering (Validity Check):}
    \State Initialize \texttt{error\_flag} $\leftarrow$ false.
    \State Set \texttt{error\_flag} $\leftarrow$ true if errors in E-step or $\texttt{parse}({\mathbf{\hat{z}}}_2)  \neq y^*$.
    \State \textbf{\# Heuristics Fallback (Addressing Forward Failure):}
    \If{\texttt{error\_flag} is true}
        \State $\mathbf{z}'^*_1 \leftarrow \texttt{canon}((x^*, y^*))$
            \State $({\mathbf{\hat{z}}}'_2 \ldots {\mathbf{\hat{z}}}'_M) \leftarrow \texttt{forward}({C}_{\text{fallback}}, \mathbf{z}'^*_1)[0]$.
            \If{no error was raised and $\texttt{parse}({\mathbf{\hat{z}}}'_2) = y^*$}
                \State $({\mathbf{\hat{z}}}_2 \ldots {\mathbf{\hat{z}}}_M) \leftarrow ({\mathbf{\hat{z}}}'_2 \ldots {\mathbf{\hat{z}}}'_M)$
                \State Set \texttt{error\_flag} $\leftarrow$ false.
            \EndIf
    \EndIf
    \If{\texttt{error\_flag} is true}
        \State $({\mathbf{\hat{z}}}_3 \ldots {\mathbf{\hat{z}}}_M) \leftarrow \texttt{forward}(C', \mathbf{z}^*_1)[0]$
        \State Set \texttt{error\_flag} $\leftarrow$ false if no errors in previous step.
    \EndIf
    \If{\texttt{error\_flag} is false}
        \State $S \leftarrow S \cup \{ (\mathbf{z}^*_1, \mathbf{z}^*_2, \mathbf{\hat{z}}_3, \ldots {\mathbf{\hat{z}}}_M) \}$
    \EndIf
\EndFor
\State \textbf{\# M-step (Parameter Update):}
\For{$(\mathbf{z}^*_1, {\mathbf{\hat{z}}}_2 \ldots {\mathbf{\hat{z}}}_M) \in S$}
    \State $\mathbf{G} \leftarrow \texttt{backward}(C, (\mathbf{z}^*_1, {\mathbf{\hat{z}}}_2 \ldots {\mathbf{\hat{z}}}_M))[2]$
    \State $\texttt{optimize}(C, \mathbf{G})$
\EndFor
\State $C' \leftarrow $ update inference network $C'$ (\cref{sec:update-amortized}).
\EndFor
\end{algorithmic}
\end{algorithm}

\subsection{Updating $C'$}
\label{sec:update-amortized}

We train the inference network $C'$ to better approximate the posterior distribution defined by $C$ alternately (\cref{sec:amortized_model_star}).
In other words, we update adaptor parameters in $C'$ so that sampled latent variables of $C'$ ($(\mathbf{\hat{z}}_3, \ldots, \mathbf{\hat{z}}_M)$ obtained using $\texttt{forward}(C', \texttt{canon}(x^*), \kappa)$) follow the normalized distributions under $C$ (obtained using $\texttt{backward}(C, (\texttt{canon}(x^*), \texttt{canon}(y^*), \mathbf{\hat{z}}_3, \ldots, \mathbf{\hat{z}}_M)))$). To promote diversity of samples, we additionally obtain samples from $C_{\text{fallback}}$ (\cref{sec:appendix-tac-star}). Let $\mathbf{Z} = (\mathbf{z}^*_3, \ldots, \mathbf{z}^*_M)$ be a sample out of $G$ collected samples $(\mathbf{Z}^{(1)}, \ldots, \mathbf{Z}^{(G)})$ from $C_{\text{fallback}}$ and $C'$. We approximate the posterior probability of $\mathbf{Z}$ under $C$, conditioning on $\mathbf{z}^*_1 = \texttt{canon}(x^*)$, $\mathbf{z}^*_2 = \texttt{canon}(y^*)$ under the balance heuristic \citep{veach1995} as
\begin{align}
    \hat{p}_{\text{posterior}}(\mathbf{Z}) \propto \frac{(N_{\text{fallback}} + N_{\text{infer}}) \tilde{p}_{\text{model}}  }{N_{\text{fallback}} p_{\text{fallback}} + N_{\text{infer}} p_{\text{infer}}},
    \label{eq:balance-heuristic}
\end{align}
where $\tilde{p}_{\text{model}} = \tilde{p}_{C}(\mathbf{z}^*_1, \mathbf{z}^*_2, \mathbf{z}^*_3, \ldots, \mathbf{z}^*_M)$, $p_{\text{fallback}} = \prod_{m=3}^{M} p_{LM}(\mathbf{z}^*_m \mid \mathbf{z}^*_m \text{'s source node}; \vtheta_{0})$, and $p_{\text{infer}} = \prod_{m=3}^{M} p_{LM}(\mathbf{z}^*_m \mid \mathbf{z}^*_m \text{'s source node}; \vtheta_{\text{new}})$. These values are all obtained using the $\texttt{backward}$ algorithm.\footnote{\texttt{backward} algorithm as presented in this work computes both gradients and probabilities. In our implementation we do not compute gradients when they are not needed; but we omit this subtlety in \cref{alg:cascade-backward}.}
We denote the number of samples attempted (including errors) on $C_{\text{fallback}} = N_{\text{fallback}}$, the number of samples attempted (including errors) on $C' = N_{\text{infer}}$. $\hat{p}_{\text{posterior}}$ is normalized over the  mixture so that $\sum_{g = 1}^{G} \hat{p}_{\text{posterior}}(\mathbf{Z}^{(g)}) = 1$.

\Cref{alg:update-amortized} updates adaptors in $C'$ to bring its unnormalized distribution closer to \cref{eq:balance-heuristic}. Since the self-normalized approximation of the posterior distribution is consistent but biased, we require minimum numbers of samples from $C'$ and $C_{\text{fallback}}$.

\begin{algorithm*}[h]
\caption{\texttt{update\_infer\_net}}
\label{alg:update-amortized}
\begin{algorithmic}[1]
\Require Training pair $(x^*, y^*)$, model \model\, $C$, sampler configuration $\kappa$, inference network $C'$, non-adaptive fallback $C_{\text{fallback}}$, number of samples from $C_{\text{fallback}}$: $G_{\text{fallback}}$, number of samples from $C'$: $G_{\text{infer}}$ .
\State $\mathbf{z'}^*_1 \leftarrow \texttt{canon}((x^*, y^*))$, $\mathbf{z}^*_1 \leftarrow \texttt{canon}(x^*)$, $\mathbf{z}^*_2 \leftarrow \texttt{canon}(y^*)$.
\State $\mathbf{Z}_{\text{collected}} \leftarrow []$
\State {\bf \# In our implementation we give up and raise an error after $30$ unsuccessful attempts.}
\While{number of successful samples from $C_{\text{fallback}} < G_{\text{fallback}}$}
\State Try $(\mathbf{\hat{z}}_2, \mathbf{\hat{z}}_3, \ldots \mathbf{\hat{z}}_M) \leftarrow \texttt{forward}(C_{\text{fallback}}, \mathbf{z'}^*_1, \kappa, 1)$
\If{previous step succeeded}
\State {\bf \# We discard $\mathbf{\hat{z}}_2$ from $C_{\text{fallback}}$.}
\State Append $(\mathbf{\hat{z}}_3, \ldots, \mathbf{\hat{z}}_M)$ to $\mathbf{Z}_{\text{collected}}$.
\EndIf
\EndWhile
\State $N_{\text{fallback}} \leftarrow $ numbers of attempts on $C_{\text{fallback}}$
\While{number of successful samples from $C' < G_{\text{infer}}$}
\State Try $(\mathbf{\hat{z}}_3, \ldots \mathbf{\hat{z}}_M) \leftarrow \texttt{forward}(C', \mathbf{z'}^*_1, \kappa, 1)$
\If{previous step succeeded}
\State Append $(\mathbf{\hat{z}}_3, \ldots, \mathbf{\hat{z}}_M)$ to $\mathbf{Z}_{\text{collected}}$.
\EndIf
\EndWhile
\State $N_{\text{infer}} \leftarrow $ numbers of attempts on $C'$
\State $G \leftarrow G_{\text{fallback}} + G_{\text{infer}}$
\State Assert $G = |\mathbf{Z}_{\text{collected}}|$
\State Compute $[\hat{p}_{\text{posterior}}(\mathbf{Z}^{(1)} \ldots \hat{p}_{\text{posterior}}(\mathbf{Z}^{(G)})]$ using \cref{eq:balance-heuristic}.
\State Sample $g \in [1 .. G]$ with probability proportional to $\hat{p}_{\text{posterior}}(\mathbf{Z}^{(g)}$.
\State $\mathbf{G} \leftarrow \texttt{backward}(C', \mathbf{Z}^{(g)})[2]$.
\State $\texttt{optimize}(C', \mathbf{G})$
\end{algorithmic}
\end{algorithm*}

\section{Formal Statements and Proofs Regarding Type Compliance}

\label{sec:partition-function-discussion}

\paragraph{Well-specifiedness.} Let $C = (\mathbf{Z}, \mathbf{E})$.
We define well-specifiedness for TAC: we say $\vtheta = \{ \vtheta_1 \ldots \vtheta_K \} $ is well-specified if for every LM adaptor $e_k = (\tau_i, \tau_o, \vtheta_k) \in \mathbf{E}$ and for every valid canonicalized string $\vx$ of type $\tau_i$, the LM distribution $p_{LM}$ only has support over valid outputs of type $\tau_o$. Formally, $\forall \text{ valid }x, \sum_{\vy \in \mathrm{\valid}(\tau_o)} p_{LM}(\vy \mid \vx ; \vtheta_k) = 1$ iff $\vtheta$ is well-specified.

We first prove that hyperedges are locally normalized (\emph{i.e.}, the partition function is $1$) when $\vtheta$ is well-specified:

\begin{lemma}
If $\vtheta$ is well-specified, then for any hyperedge $e_k \in \mathbf{E}$ and any valid assignment $\vx$ to its source nodes, the local partition function $Z_k = 1$.
\label{thm:local-well-specified}
\end{lemma}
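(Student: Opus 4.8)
The plan is to case-split on the two hyperedge types introduced in \cref{sec:method} — learnable LM-adaptor hyperedges (\cref{eq:adaptor_prob}) and deterministic-algorithm hyperedges (\cref{eq:deterministic_prob}) — and verify $Z_k = 1$ in each case by directly unpacking the definition of the local partition function $Z_k = \sum_{\vy} \tilde{p}_{\vtheta}(\vy \mid \vx ; e_k)$.

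For an LM-adaptor hyperedge $e_k = (\tau_i, \tau_o, \vtheta_k)$, I would write $Z_k = \sum_{\vy \in \Sigma^*} p_{LM}(\vy \mid \vx ; \vtheta_k)\,\mathbb{I}(\vy \in \mathrm{valid}(\tau_o)) = \sum_{\vy \in \mathrm{valid}(\tau_o)} p_{LM}(\vy \mid \vx ; \vtheta_k)$, the indicator collapsing the sum onto strings that represent valid $\tau_o$-objects. Since $\vx$ is a valid assignment to $e_k$'s (single) source node — and, as adaptor inputs are always produced via $\texttt{canon} \circ \texttt{parse}$ in \cref{alg:cascade-forward,alg:cascade-backward}, is a canonicalized string of type $\tau_i$ — the well-specifiedness hypothesis on $\vtheta$ states exactly that this restricted sum equals $1$. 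Hence $Z_k = 1$.

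For a deterministic-algorithm hyperedge carrying $f : \tau_i \rightarrow \tau_o$, the conditional is the degenerate distribution $\delta_{\texttt{canon}(f(\texttt{parse}(x, \tau_i)))}(\cdot)$. Because $\vx$ is a valid assignment, $\texttt{parse}(x, \tau_i)$ returns a genuine $\tau_i$-typed object rather than $\text{error}$, $f$ applied to it is a well-defined $\tau_o$-object, and $\texttt{canon}$ of that object is a single string in $\Sigma^*$; therefore $\sum_{\vy} \delta_{\texttt{canon}(f(\texttt{parse}(x, \tau_i)))}(\vy) = 1$, independently of $\vtheta$. The same reasoning applies verbatim when $e_k$ has several source nodes, with $\texttt{parse}$ and $f$ acting on the tuple of source values. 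Combining the two cases yields $Z_k = 1$ for every $e_k \in \mathbf{E}$.

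I do not expect a genuine obstacle: the lemma is essentially a bookkeeping unfolding of definitions. The only step that warrants an explicit sentence is checking that the argument $\vx$ to which we invoke well-specifiedness really is a canonicalized string of type $\tau_i$, so that it falls under that assumption's universal quantifier; this is immediate from how adaptor inputs are canonicalized in the forward and backward routines and from the invertibility of $\texttt{canon}$ discussed in \cref{sec:interface_ops}.
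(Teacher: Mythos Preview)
Your proposal is correct and follows essentially the same approach as the paper's own proof: a case split on LM-adaptor versus deterministic hyperedges, with the adaptor case collapsing the indicator onto $\mathrm{valid}(\tau_o)$ and invoking well-specifiedness, and the deterministic case summing a degenerate $\delta$ to $1$. Your additional remarks on canonicalization of $\vx$ and on multi-source deterministic edges go slightly beyond what the paper spells out, but they are sound and do not change the argument.
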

\begin{proof}
    $e_k$ is either an LM adaptor or a deterministic algorithm:
\begin{itemize}
    \item If $e_k$ is an LM adaptor, $Z_k = \sum_{\vy} \tilde p_{\vtheta} (\vy \mid \vx; e_k) = \sum_{\vy \in \mathrm{valid}(\tau_o) } p_{LM}(\vy \mid \vx ; \vtheta_k) = 1$.
    \item If $e_k$ is a deterministic algorithm, by \cref{eq:deterministic_prob} $Z_k = \sum_{\vy} \tilde{p}(\vy \mid \vx; e_k) = \tilde{p}(\texttt{canon}(f(\texttt{parse}(\vx, \tau_i))) + 0 = 1 + 0 = 1$.
\end{itemize}
\end{proof}

We then use induction based on the \model\,$C$'s topological structure.

\begin{lemma}
Let $\vtheta$ be a well-specified parameter vector for \model\,$C = (\mathbf{Z}, \mathbf{E})$. The conditional partition function $\mathcal{Z}_{\vtheta}(\mathbf{z}_1) = 1$.
\label{thm:induction}
\end{lemma}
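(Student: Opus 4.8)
The plan is to induct on a topological ordering $e'_1, \ldots, e'_K$ of the hyperedges (which exists by acyclicity of $C$), carrying a strengthened hypothesis that tracks \emph{both} normalization and type validity, and to invoke the per-hyperedge result \cref{thm:local-well-specified} at each step. Recall from \cref{eq:joint_prob} that $\tilde p_{\vtheta}(\mathbf{Z}^*) = \prod_{k} \tilde p_{\vtheta}(\{\mathbf{z}^*_t\}_{t \in T_k} \mid \{\mathbf{z}^*_s\}_{s \in S_k}; e_k)$, so with the input node fixed to a valid assignment $\mathbf{z}_1$ we have $\mathcal{Z}_{\vtheta}(\mathbf{z}_1) = \sum_{\mathbf{z}_2, \ldots, \mathbf{z}_M} \prod_k \tilde p_{\vtheta}(\{\mathbf{z}_t\}_{t\in T_k} \mid \{\mathbf{z}_s\}_{s\in S_k}; e_k)$. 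I will use the structural assumption implicit in the generative-story semantics: each non-input node is the target of exactly one hyperedge, and $\bigcup_k T_k = \mathbf{Z} \setminus \{\mathbf{z}_1\}$, so the product above is in fact a chain-rule factorization of a single joint over $\mathbf{z}_2, \ldots, \mathbf{z}_M$ rather than an arbitrary product of factors.

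For $0 \le j \le K$, let $q_j$ denote the partial product of the first $j$ conditional factors, regarded as a function of the values of the nodes in $\bigcup_{k \le j} T_k$ (given the fixed valid $\mathbf{z}_1$). The induction hypothesis is that $q_j$ is a genuine probability distribution over assignments of $\bigcup_{k \le j} T_k$ whose support consists only of assignments in which every such node is type-valid. The base case $j=0$ is vacuous: $q_0 = 1$, no nodes have been assigned, and $\mathbf{z}_1$ is valid by hypothesis. For the inductive step, by the topological ordering every source node in $S_j$ lies in $\{\mathbf{z}_1\} \cup \bigcup_{k < j} T_k$, so on the support of $q_{j-1}$ all source values of $e'_j$ are type-valid; \cref{thm:local-well-specified} then applies and gives that $\tilde p_{\vtheta}(\cdot \mid \{\mathbf{z}_s\}_{s \in S_j}; e_j)$ is a normalized distribution supported on type-valid assignments of $T_j$. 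Hence $q_j = q_{j-1} \cdot \tilde p_{\vtheta}(\cdot \mid \{\mathbf{z}_s\}_{s \in S_j}; e_j)$ is a distribution times a conditional distribution, which is again a distribution, with support exactly the jointly type-valid assignments.

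Taking $j = K$ shows that $q_K$, which is precisely the conditional joint $\tilde p_{\vtheta}(\mathbf{z}_2, \ldots, \mathbf{z}_M \mid \mathbf{z}_1)$, is a probability distribution over all of $\mathbf{z}_2, \ldots, \mathbf{z}_M$, so $\mathcal{Z}_{\vtheta}(\mathbf{z}_1) = \sum_{\mathbf{z}_2, \ldots, \mathbf{z}_M} q_K = 1$. Operationally this is the same computation viewed from the other end: pushing the sums inward in reverse topological order, $\sum_{\{\mathbf{z}_t\}_{t \in T_K}} \tilde p_{\vtheta}(\{\mathbf{z}_t\}_{t\in T_K} \mid \{\mathbf{z}_s\}_{s \in S_K}; e_K) = 1$ by \cref{thm:local-well-specified} (valid because $S_K$ lies among earlier nodes), peeling off $e'_K$; iterating collapses every factor to $1$.

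The main subtlety I expect is the interlocking of the two clauses of the induction hypothesis: \cref{thm:local-well-specified} can be invoked at $e'_j$ only once its inputs are known to be type-valid, which is exactly the ``support is type-valid'' clause inherited from step $j-1$, so normalization and validity must be proved jointly and cannot be decoupled. A secondary point requiring care is justifying the identification of $\prod_k \tilde p_{\vtheta}(\cdot)$ with a chain-rule factorization — i.e. the single-incoming-hyperedge structural assumption on $C$ — without which the telescoping argument would not go through.
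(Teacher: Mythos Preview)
Your proof is correct and takes essentially the same approach as the paper: topological induction invoking \cref{thm:local-well-specified} at each step, with the validity of source nodes ensuring the local partition function equals $1$. The only cosmetic differences are that the paper inducts over nodes rather than hyperedges and handles validity by an explicit case split (valid $\Rightarrow$ inner sum is $1$; invalid $\Rightarrow$ summand is $0$) at each step, whereas you fold validity into a strengthened induction hypothesis; you are also more explicit than the paper about the single-incoming-hyperedge structural assumption that makes the product a chain-rule factorization.
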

\begin{proof}
    We use induction on the number of nodes $k$, following the topological sort $\mathbf{z}_1, \ldots \mathbf{z}_M$. For clarity, here we abuse the subscript notation for topological order, and therefore $\mathbf{z}_M$ (instead of $\mathbf{z}_2$) is the output.

Let $C_k$ be the sub-\model\,induced by $\{\mathbf{z}_1, \ldots , \mathbf{z}_k \}$. Its partition function is $\mathcal{Z}_k(\mathbf{z}_1 ) = \sum_{\mathbf{z}_2 \ldots \mathbf{z}_k} \prod_{m=2}^{k} \tilde{p}_{\vtheta}(\mathbf{z}_m \mid S_m),$ where $S_m$ denotes the source nodes of $\mathbf{z}_m$ under its corresponding hyperedge.

\paragraph{Base Case.} $k=1$. $C_1$ has only $\mathbf{z}_1$. $\mathcal{Z}_1(\mathbf{z}_1) = 1$ since the product is empty.

\paragraph{Inductive Step.} We assume $\mathcal{Z}_{k-1}(\mathbf{z}_1) = 1$. First we rewrite $\mathcal{Z}_k(\mathbf{z}_1)$ by explicitly summing over $\mathbf{z}_k$. Since $\mathbf{z}_1, \ldots \mathbf{z}_k$ is a topological order, the source nodes of $\mathbf{z}_k$: $S_k$ is a subset of $\{ \mathbf{z}_1, \ldots \mathbf{z}_{k-1}\}$. We thus rewrite $\mathcal{Z}_k(\mathbf{z}_1)$ as
\begin{align}
\mathcal{Z}_k(\mathbf{z}_1) &= \sum_{ \mathbf{z}_2 \ldots \mathbf{z}_k } \left( \prod_{m=2}^{k-1} \tilde{p}_{\theta}(\mathbf{z}_m \mid S_m) \right) \cdot \left(\sum_{\mathbf{z}_k} \tilde{p}_{\theta}( \mathbf{z}_k \mid S_k ) \right).
\label{eq:z-k-rewrite}
\end{align}

We discuss the summands by the validity of $\mathbf{z}_2 \ldots \mathbf{z}_{k-1}$:
\begin{itemize}
\item If $\mathbf{z}_2 \ldots \mathbf{z}_{k-1}$ is valid: by \cref{thm:local-well-specified} the term $\sum_{\mathbf{z}_k} \tilde{p}_{\vtheta}( \mathbf{z}_k \mid S_k ) = 1$. This summand is therefore $\prod_{m=2}^{k-1} \tilde{p}_{\vtheta}(\mathbf{z}_m \mid S_m)$.
\item If $\mathbf{z}_2 \ldots \mathbf{z}_{k-1}$ is not valid: by Eqs 1 and 2 this summand is 0.
\end{itemize}

We can thus rewrite \cref{eq:z-k-rewrite} as 
\begin{align}
\mathcal{Z}_k(\mathbf{z}_1) &= \sum_{\mathbf{z}_2, \ldots, \mathbf{z}_{k-1} \mid \text{valid assignments} } \prod_{m=2}^{k-1} \tilde{p}_{\vtheta}(\mathbf{z}_m \mid S_m).
\label{eq:z-k-rewrite-2}
\end{align}

\Cref{eq:z-k-rewrite-2} can be further rewritten to sum over both valid and invalid $\mathbf{z}_2, \ldots , \mathbf{z}_{k-1}$ assignments (since again by \cref{eq:adaptor_prob,eq:deterministic_prob}, the summand is $0$ for invalid assignments):
\begin{align}
    \mathcal{Z}_k(\mathbf{z}_1) &= \sum_{\mathbf{z}_2, \ldots, \mathbf{z}_{k-1} } \prod_{m=2}^{k-1} \tilde{p}_{\vtheta}(\mathbf{z}_m \mid S_m) = \mathcal{Z}_{k-1}(\mathbf{z}_1).
\end{align}

Since by assumption $\mathcal{Z}_{k-1}(\mathbf{z}_1) = 1$, we thus prove by induction $\mathcal{Z}_M(\mathbf{z}_{1}) = \mathcal{Z}_{\vtheta}(\mathbf{z}_1) = 1$.
\end{proof}

Finally, we show that \cref{thm:induction} implies the equivalence of maximizing the normalized and unnormalized likelihoods when the true parameters are well-specified.
\unnormalizedisgoodenough*
\begin{proof}
First we note $\forall \vtheta \in \vTheta , \mathcal{Z}_{\vtheta}(\mathbf{z}_1) \leq 1 $, since for any adaptor $\sum_{y} \tilde{p}_{\vtheta} ( \vy \mid \vx) \leq 1 $ . By \cref{eq:adaptor_prob,eq:deterministic_prob} the global partition function must also be $\leq 1$.

We rewrite the unnormalized likelihood as a product of normalized likelihood and the partition function:
\begin{align}
\tilde{p}_{\vtheta}(\mathbf{z}_{2 \ldots M} \mid \mathbf{z}_1) &= p_{\vtheta}(\mathbf{z}_{2 \ldots M} \mid \mathbf{z}_1) \cdot \mathcal{Z}_{\vtheta} ( \mathbf{z}_1 )
\end{align}

Since $\mathcal{Z}_{\vtheta}(\mathbf{z}_1) \leq 1$, $\forall \vtheta \in \vTheta , \tilde{p}_{\vtheta}(\mathbf{z}_{2 \ldots M} \mid \mathbf{z}_1) \leq p_{\vtheta}(\mathbf{z}_{2 \ldots M} \mid \mathbf{z}_1)$.

At the well-specified true parameters $\vtheta^*$ we have $\mathcal{Z}_{\vtheta}(\mathbf{z}_1) = 1$ by \cref{thm:induction}. Therefore $\tilde{p}_{\vtheta^*}(\mathbf{z}_{2 \ldots M} \mid \mathbf{z}_1) = p_{\vtheta^*}(\mathbf{z}_{2 \ldots M} \mid \mathbf{z}_1)$.

By our assumption that $\vtheta^*$ maximizes normalized likelihood, $\forall \vtheta \in \vTheta , p_{\vtheta^*}(\mathbf{z}_{2 \ldots M} \mid \mathbf{z}_1) \geq p_{\vtheta}(\mathbf{z}_{2 \ldots M} \mid \mathbf{z}_1)$.

Combining everything together:
\begin{align*}
\tilde{p}_{\vtheta^*}(\mathbf{z}_{2 \ldots M} \mid \mathbf{z}_1) &= p_{\vtheta^*}(\mathbf{z}_{2 \ldots M} \mid \mathbf{z}_1) \\ & \geq p_{\vtheta}(\mathbf{z}_{2 \ldots M} \mid \mathbf{z}_1) \\ & \geq \tilde{p}_{\vtheta}(\mathbf{z}_{2 \ldots M} \mid \mathbf{z}_1)
\end{align*}
for all $\vtheta \in \vTheta $. Under the assumption $\vtheta^*$ is unique, $\vtheta^* = \argmax_{\vtheta \in \vTheta} \tilde{p}_{\vtheta}(\mathbf{z}_{2 \ldots M} \mid \mathbf{z}_1) = \hat{\vtheta}$.
\end{proof}

\boundingmlegradients*
\begin{proof}
Here we fix $\mathbf{z}_1 = x$. We denote $\mathbf{z}_{2 \ldots M} = y$.
Let $p_{LM}^{(k)}(y)$ be the $k$-th LM adaptor's unmasked node probability, given $(x, y)$ as \model\,input and output. We then denote
$p_{\vtheta}(y) = \prod_{k} p_{LM}^{(k)}$ as a \model 's \emph{normalized} distribution over node assignments (without masking invalid ones).
    The partition function $\mathcal{Z}_{\vtheta} = \sum_{y} p_{\vtheta}(y \mid x) \mathbb{I}(y \in V) = \mathrm{Pr}_{p_{\vtheta}}(V)$ where $V$ is the set of valid node assignments.

We first rewrite $\nabla_{\vtheta} \log \mathcal{Z}_{\vtheta}$ as an expectation under $p_{\vtheta}$:
\begin{align}
    \nabla_{\vtheta} \log \mathcal{Z}_{\vtheta} &= \mathbb{E}_{y \sim p_{\vtheta}( \cdot \mid V)}\left[ \nabla_{\vtheta} \log p_{\vtheta}(y) \right].
    \label{eq:grad-log-z}
\end{align}
Using the identity $\sum_{y} p_{\vtheta}(y) \nabla_{\vtheta} \log p_{\vtheta}(y) = 0$, we rewrite \cref{eq:grad-log-z} as
\begin{align}
    \nabla_{\vtheta} \log \mathcal{Z}_{\vtheta} &= \mathbb{E}_{y \sim p_{\vtheta}( \cdot \mid V)}\left[ \nabla_{\vtheta} \log p_{\vtheta}(y) \right] - \mathbb{E}_{y \sim p_{\vtheta}}\left[ \nabla_{\vtheta} \log p_{\vtheta}(y) \right].
    \label{eq:grad-log-z-2}
\end{align}
Let $f = \nabla_{\vtheta} \log p_{\vtheta}(y)$. We can now rewrite $\Vert \nabla_{\vtheta} \log \mathcal{Z}_{\vtheta} \Vert_{\infty}$ as
\begin{align}
    \Vert \nabla_{\vtheta} \log \mathcal{Z}_{\vtheta} \Vert_{\infty} &= \Vert \mathbb{E}_{p_{\cdot \mid V}} \left[ f \right] - \mathbb{E}_{p_{\vtheta}} \left[ f \right] \Vert_{\infty} \nonumber \\
    &= \Vert \sum_{y} f \cdot ( p_{\vtheta}(y \mid V) - p_{\vtheta}(y) )   \Vert_{\infty} \nonumber \\
    & \leq \sum_{y} \Vert f \Vert_{\infty} \cdot | p_{\vtheta}(y \mid V) - p_{\vtheta}(y) | \nonumber\\
    & \leq \sum_{y} G \cdot  | p_{\vtheta}(y \mid V) - p_{\vtheta}(y) |.
    \label{eq:grad-log-z-magnitude}
\end{align}
Noting that $\sum_{y} | p_{\vtheta}(y \mid V) - p_{\vtheta}(y) |$ is twice the total variation between $p_{\vtheta}$ and $p_{\vtheta}(\cdot \mid V)$, and that the total variation between $p_{\vtheta}$ and $p_{\vtheta}(\cdot \mid V)$ is $(1 - \mathcal{Z}_{\vtheta})$ \Dash the sum of invalid assignments' probabilities under $p_{\vtheta}$ \Dash we can rewrite \cref{eq:grad-log-z-magnitude} as $\Vert \nabla_{\vtheta} \log \mathcal{Z}_{\vtheta} \Vert_{\infty} \leq 2 G (1 - \mathcal{Z}_{\vtheta})$.

\end{proof}

\section{Implementation Considerations}

\label{sec:implementation}

In this section we discuss practical implementation considerations. In particular, we distinguish between \emph{one-time} and \emph{per-use} efforts.

\subsection{One-Time Efforts}

\paragraph{Parsing and canonicalization.} There exist multiple libraries that can readily be used to implement \texttt{parse} and \texttt{canon} for typed data-holding objects in Python. One example is LangFun which we use extensively in the paper. Another popular library is Pydantic, which is used in DSPy.

\paragraph{Type validation logic.} As we briefly discussed in \cref{ft:validation-logic}, the \texttt{parse} function can be used to implement complex business logic. Such logic can usually be implemented cleanly as part of type definition (\emph{e.g.}, as \texttt{\_\_init\_\_} and \texttt{\_\_post\_init\_\_} methods in Python).

\paragraph{Algorithms.} The core \model\,algorithms for execution and training (Algorithms listed in \cref{sec:algorithms}) are general and need only be implemented once. The main computational bottlenecks in these algorithms are:
\begin{itemize}
    \item Sampling from an LM adaptor $p_{LM}(\cdot ; \vtheta)$.
    \item Evaluating the conditional probability of $\vy$ given $\vx$ under an LM adaptor: $p_{LM}(\vy \mid \vx ; \vtheta)$.
    \item Computing gradients of $(\vx, \vy)$ with regard to parameters $\vtheta$: $\nabla_{\vtheta} \log p_{LM}(\vy \mid \vx ; \vtheta)$.
\end{itemize}
A practical implementation can abstract these bottlenecks away, by offloading these intensive parts to dedicated inference servers (e.g., vLLM). The core \model\,logic remains a lightweight, accelerator-agnostic program. Furthermore, since \model s use parameter-efficient fine-tuning (PEFT), the adaptor weights and gradients are small enough to be processed quickly, often without needing dedicated accelerators for the logic itself. This design significantly reduces the low-level engineering burden.

\subsection{Per-Use Efforts}

Once the core engine is in place, a practitioner's effort is focused on defining a \model\,hypergraph for their specific task.
Since the \model\,hypergraph is essentially a data flow graph, it can be represented in a way that is directly analogous to network architecture definitions in popular neural network frameworks such as PyTorch, where the \texttt{Module} s represent hyperedges, and their \texttt{forward} methods connect the typed data nodes.

\section{Additional \model\,Diagrams of Trainable Workflows}

\label{sec:model-figures}

\begin{figure*}[h]
\centering
\begin{tikzpicture}[>=Stealth]

    \node[observed, align=center] (x) {$\mathbf{z}_1$ \\ type: $\tau_i$};
    \node[latent, align=center, below=2cm of x] (yprime) {$\mathbf{z}_3$ \\ type: $\tau_o$};
    \node[latent, align=center,  above right=.8cm and 5cm of yprime] (iyprime) {$\mathbf{z}_4$ \\ type: $\tau_{io}$};
    \node[latent, align=center, right=2cm of iyprime] (r) {$\mathbf{z}_5$ \\ type: $\tau_r$};
    \node[observed, align=center, below=2cm of r] (y) {$\mathbf{z}_2$ \\ type: $\tau_o$};
    \node[latent, align=center, left=2cm of y] (iyprimer) {$\mathbf{z}_6$ \\ type: $\tau_{ior}$};
    \draw[->] (x) edge node[left] {$(\tau_i, \tau_{o}, \vtheta_5)$}  (yprime);
    \hyperedge[-60]{x}{yprime}{iyprime}{$\mathrm{combine\_io}: \tau_i \times \tau_o \rightarrow \tau_{io}$};
    \hyperedgeleft[-60]{iyprime}{r}{iyprimer}{$\mathrm{combine\_ior}: \tau_{io} \times \tau_r \rightarrow \tau_{ior}$};
    \draw[->] (iyprime) edge node[above] {$(\tau_{io}, \tau_{r}, \vtheta_6)$}  (r);
    \draw[->] (iyprimer) edge node[above] {$(\tau_{ior}, \tau_{o}, \vtheta_7)$}  (y);
\end{tikzpicture}
\caption{{\bf refine-structure}: refinement through cascade topology engineering. This cascade models a refinement process where an initial output sketch is iteratively refined based on generated rationales.}
\label{fig:multiple-adaptor-tac-refinement}
\end{figure*}

\begin{figure*}[h]
\centering
\begin{tikzpicture}[>=Stealth]

    \node[observed, align=center] (x) {$\mathbf{z}_1$ \\ type: $\tau_{io}$};
    \node[latent, align=center, right=2cm of x] (oprime) {$\mathbf{z}_3$ \\ type: $\tau_{ro}$};
    \node[observed, align=center, right=3cm of oprime] (y) {$\mathbf{z}_2$ \\ type: $\tau_o$};

    \draw[->] (x) edge node[above] {$(\tau_{io}, \tau_{ro}, \vtheta_0)$}  (oprime);
    \draw[->] (oprime) edge node[above] {$\mathrm{extract}$: $\tau_{ro} \rightarrow \tau_o$} (y);

\end{tikzpicture}
\caption{$C_{\text{fallback}}$ for {\bf cot-type-structure}. Notice that the adaptor $(\tau_{io}, \tau_{ro}, \vtheta_0)$ uses `fallback' weights $\vtheta_0$ that represent no-op weights. Since we conduct experiment on LoRA adaptors in this work, we use the zero-init vectors as $\vtheta_0$.}
\label{fig:fallback-tac}
\end{figure*}

\begin{figure*}[h]
\centering
\begin{tikzpicture}[>=Stealth]
    \node[observed, align=center] (x) {$\mathbf{z}_1$ \\ type: $\tau_{io}$};
    \node[latent, align=center, below=2cm of x] (r) {$\mathbf{z}_3$ \\ type: $\tau_r$};
    \node[latent, align=center,  above right=.8cm and 5cm of r] (ir) {$\mathbf{z}_2$ \\ type: $\tau_{ir}$};
    \draw[->] (x) edge node[left] {$(\tau_{io}, \tau_{r}, \vtheta_8)$}  (r);
    \hyperedge[-60]{x}{r}{ir}{$\mathrm{combine\_io\_ir}: \tau_{io} \times \tau_{r} \rightarrow \tau_{ir}$};
\end{tikzpicture}
\caption{Inference network \model\,$C'$ for {\bf cot-type-structure}.}
\label{fig:inference-net-tac}
\end{figure*}

\FloatBarrier

\section{Further Details of Experiment Setup}

\label{sec:setup-appendix}

\paragraph{Data splits.} We focus on the low-data regime of task adaptation in this work. For MGSM and MGSM-SymPy, each language has $100$/$30$/$120$ training/validation/test examples respectively. The splits are $100$/$30$/$100$ and $100$/$30$/$300$ for HotPotQA and FinQA respectively. For HotPotQA and FinQA, we use the first entries from the original dataset files as our training and evaluation subsets. For MGSM experiments, we train and evaluate on each language separately. For MuSR tasks, the splits are $100/30/120$ and $100/30/126$ respectively.

\paragraph{Evaluation.} We look at exact match accuracy scores of the answers for all $5$ tasks.
For MGSM-SymPy experiments, we convert answers from the dataset to integers; as for the model predictions, we evaluate the expressions as rational numbers under SymPy\footnote{\url{https://www.sympy.org/en/index.html}}, and cast the results as integer numbers. We do not make use of additional clues from the datasets (\emph{e.g.}, the rationales provided for the $8$ examples in MGSM datasets).

\subsection{\model\,setup}

\paragraph{Training procedure.} We train all workflows that have latent variables with our \model STaR and Amortized \model STaR algorithms, except for the original (untyped) STaR experiments. Since {\bf direct} experiments do not have latent variables, we train those models using the ordinary cross entropy loss.  In all experiments we use a batch size of $8$. The Adam optimizer \citep{kingma2014adam} is used throughout all experiments, with a learning rate of $5e-5$. We early-stop if no higher validation score is achieved for $4$ consecutive epochs. The sampler configuration $\kappa$ is set to use a combination of top-K and nucleus sampling \citep{nucleussampling}, where we first choose the top $40$ candidates, and cut off accumulated probability mass at $0.95$. To train the inference \model s, we accumulate $32$ samples from $C_{\text{infer}}$ and $16$ samples from the fallback model (that is, $G = 48$ at the end of \cref{alg:update-amortized}).

\paragraph{Decoding procedure for generation tasks.} Here we denote the answer type as $\tau_o$. For each test input instance, we obtain $32$ samples $\mathbf{\hat{Z}}^{(1)} \ldots \mathbf{\hat{Z}}^{(32)}$ using \texttt{forward}, bucket their output node values $\texttt{parse}(\mathbf{\hat{z}}_2^{(1)}, \tau_o) \ldots \texttt{parse}(\mathbf{\hat{z}}_2^{(32)}, \tau_o)$ into $B$ bins, identified by the parsed output $y_1 \ldots y_B$. We output the answer with maximum accumulated unnormalized probability mass, namely $\argmax_{b} \sum_{ s \in [1 .. 32], \mathrm{parse}(\mathbf{\hat{z}}_2, \tau_o) = y_b} \tilde{p}_{\vtheta}(\mathbf{\hat{Z}}^{(s)})$.

\paragraph{Decoding procedure for classification tasks.} We estimate each label $c$'s normalized marginal probability using \cref{eq:unnormalized-marginal-estimate}, with $N=32$. We output the label with largest normalized marginal probability as prediction.

\paragraph{Object representation of data.} We represent input $\tau_i$ and output $\tau_o$ as Python types. The objects are encoded as string representations under LangFun. We design the input and output types separately to reflect the original dataset schemata (\cref{lst:mgsm,lst:hotpotqa,lst:finqa}). As for the rationales (represented by $\tau_r$ in {\bf cot-type-structure} and {\bf cot-cascade-structure}) we represent them as lists of strings (\cref{lst:rationale}). Product types are represented as new Python classes (\emph{e.g.}, the product of type \texttt{Question} and \texttt{Answer}, represented as $\tau_{io}$ in \cref{fig:fallback-tac,fig:inference-net-tac}, is a new class \text{QuestionAnswer}). The object representation can be arbitrarily complex, with LangFun handling all \texttt{canon} and \texttt{parse} logic (for example, \cref{lst:hotpotqarefine} has \texttt{Answer} objects embedded in multiple types; and \cref{lst:mgsmexpression} has self-referential definitions).

\begin{lstlisting}[language=Python, caption={Input and output type definitions for MGSM}, label=lst:mgsm]
class Question:
  question: str


class Answer:
  answer: str
\end{lstlisting}

\begin{lstlisting}[language=Python, caption={Input and output type definitions for HotPotQA}, label=lst:hotpotqa]
class Paragraph:
  title: str
  sentences: list[str]


class Context:
  paragraphs: list[Paragraph]



class Answer:
  answer: str


class Question:
  id: str
  question: str
  context: Context
\end{lstlisting}

\begin{lstlisting}[language=Python, caption={Input and output type definitions for FinQA}, label=lst:finqa]
class Question:
  question: str
  pre_text: list[str]
  table: list[list[str]]
  post_text: list[str]


class Step:
  op: str
  arg1: str
  arg2: str
  res: str


class Answer:
  answer: str


class QuestionAnswer:
  question: Question
  answer: Answer


class Answer:
  answer: str
\end{lstlisting}

\begin{lstlisting}[language=Python, caption={\texttt{Rationale} type definition}, label=lst:rationale]
class Rationale:
  steps: list[str]
\end{lstlisting}

\begin{lstlisting}[language=Python, caption={\texttt{QuestionAnswer} type definition}, label=lst:questionanswer]
class QuestionAnswer:
  question: Question
  answer: Answer
\end{lstlisting}

\begin{lstlisting}[language=Python, caption={Type definitions for {\bf refine-structure} on HotPotQA}, label=lst:hotpotqarefine]
class ThinkingSteps:
  steps: list[str]


class Paragraph:
  title: str
  sentences: list[str]


class Context:
  paragraphs: list[Paragraph]


class SupportingFact:
  title: str
  sentence: str


class RelevantContext:
  sentences: list[str]


class Answer:
  answer: str


class Question:
  id: str
  question: str
  context: Context


class QuestionAnswer:
  question: Question
  answer: Answer


class AnswerFirstAttemptThinkingStepsAnswer:
  answer_first_attempt: Answer
  thinking_steps: ThinkingSteps
  answer: Answer


class QuestionAnswerFirstAttempt:
  question: Question
  answer_first_attempt: Answer


class QuestionAnswerFirstAttemptThinkingSteps:
  question: Question
  answer_first_attempt: Answer
  thinking_steps: ThinkingSteps
\end{lstlisting}

\begin{lstlisting}[language=Python, caption={\texttt{Expression} type definitions in MGSM {\bf expression-cascade-structure} experiments}, label=lst:mgsmexpression]
class Expression:
  operator: Literal['+', '-', '*', '/']
  left: Union[int, 'Expression']
  right: Union[int, 'Expression']

class Answer:
  answer: Expression
\end{lstlisting}

\subsection{DSPy setup}

\label{sec:dspy-setup}

We conduct most of the DSPy experiments under v 3.0.1, but report results from DSPy v 2.6.19 for \texttt{gemini-1.1-7b-it} experiments since both BFSWRS and MIPROv2 struggle to generate valid outputs under DSPy v 3.0.1. Moreover, the non-optimized MGSM average accuracy is much lower under v 3.0.1 (for Native CoT it is $0.7\%$ under v 2.6.19, and $0.2\%$ under v 3.0.1).
For all other experiments, we report results from DSPy v 3.0.1 which sets up JSON schema-based constrained decoding correctly out-of-the-box. As we noted in \cref{sec:effectiveness-against-prompt-optimization}, constrained decoding significantly improves performance for tasks with structured output.

We serve base models on vLLM v 0.10.0.

\paragraph{Input and output object definitions.} For structured input and output tasks, we subclass \texttt{dspy.Signature} as \texttt{QASignature} to represent examples. The property names and types in a \texttt{QASignature} class are identical to counterparts in \model\,experiments. FinQA and MGSM-SymPy signatures are listed in \cref{lst:dspy-signature-finqa} and \cref{lst:dspy-signature-mgsmexpression} respectively.

\begin{lstlisting}[language=Python, caption={DSPy object signature for FinQA. Property names and types are identical to their \model\,counterparts in \cref{lst:finqa}}, label=lst:dspy-signature-finqa]
class QASignature(dspy.Signature):
  pre_text: list[str] = dspy.InputField()
  table: list[list[str]] = dspy.InputField()
  post_text: list[str] = dspy.InputField()
  question: str = dspy.InputField()
  answer: str = dspy.OutputField()
\end{lstlisting}

\begin{lstlisting}[language=Python, caption={DSPy object signature for MGSM-SymPy. Property names and types are identical to their \model\,counterparts in \cref{lst:mgsmexpression}}, label=lst:dspy-signature-mgsmexpression]
class Expression(pydantic.BaseModel):
  operator: Literal['+', '-', '*', '/']
  left: Union[int, float, 'Expression']
  right: Union[int, float, 'Expression']

class QASignature(dspy.Signature):
  question: str = dspy.InputField()
  answer: Expression = dspy.OutputField()
\end{lstlisting}

\paragraph{DSPy models.} We conduct reasoning experiments on both the native \texttt{dspy.ChainOfThought} module, and an explicitly two-step composite module that resembles \model\,{\bf cot-cascade-structure} patterns. Two-step modules for FinQA and MuSR are listed in \cref{lst:dspy-two-step-finqa,lst:dspy-two-step-musr} as examples.

\begin{lstlisting}[language=Python, caption={DSPy two-step reasoning model definition for FinQA}, label=lst:dspy-two-step-finqa]
class QuestionRationale(dspy.Signature):
  question: str = dspy.InputField()
  pre_text: list[str] = dspy.InputField()
  table: list[list[str]] = dspy.InputField()
  post_text: list[str] = dspy.InputField()
  question: str = dspy.InputField()
  rationale: list[str] = dspy.OutputField()

class RationaleAnswer(dspy.Signature):
  rationale: list[str] = dspy.InputField()
  answer: str = dspy.OutputField()

class TwoStepPredictor(dspy.Module):
  def __init__(self):
    self.question_to_rationale = dspy.Predict(QuestionRationale)
    self.rationale_to_answer = dspy.Predict(RationaleAnswer)

  def forward(self, pre_text: list[str], table: list[list[str]], post_text: list[str], question: str):
    r = self.question_to_rationale(question=question, pre_text=pre_text, table=table, post_text=post_text).rationale
    return dspy.Prediction(answer=self.rationale_to_answer(rationale=r).answer)
\end{lstlisting}

\begin{lstlisting}[language=Python, caption={DSPy two-step reasoning model definition for MuSR}, label=lst:dspy-two-step-musr]
class QuestionRationale(dspy.Signature):
  context: str = dspy.InputField()
  question: str = dspy.InputField()
  choices: list[str] = dspy.InputField()
  rationale: list[str] = dspy.OutputField()

class RationaleAnswer(dspy.Signature):
  rationale: list[str] = dspy.InputField()
  choices: list[str] = dspy.InputField()
  answer: str = dspy.OutputField()

class TwoStepPredictor(dspy.Module):
  def __init__(self):
    self.question_to_rationale = dspy.Predict(QuestionRationale)
    self.rationale_to_answer = dspy.Predict(RationaleAnswer)

  def forward(self, context: str, question: str, choices: list[str]):
    r = self.question_to_rationale(question=question, context=context, choices=choices).rationale
    return dspy.Prediction(answer=self.rationale_to_answer(rationale=r, choices=choices).answer)
\end{lstlisting}

\paragraph{Prompt optimization under DSPy.} We experiment with optimizers \texttt{dspy.MIPROv2} and \texttt{dspy.BootstrapFewShotWithRandomSearch} (listed as BFSWRS below). For MGSM-SymPy and FinQA experiments we do not report BFSWRS results, as they consistently need more context length than the model maximum ($8192$). Moreover, for FinQA experiments we resort to MIPROv2 0-shot due to similar context length problems. 

We set \texttt{max\_errors=2} for all optimizers. For {MiPROv2} we set \texttt{auto='medium'}. For MiPROv2 with 0-shot settings we additionally set \texttt{max\_bootstrapped\_demos=0, max\_labed\_demos=0}.

\clearpage

\section{Per-Language \model\, and Original STaR MGSM and MGSM-SymPy results}
\label{sec:per-language-mgsm-results}

Per-language \model\,and original STaR experimental results on tasks MGSM and MGSM-SymPy are listed in \cref{tab:mgsm-numbers,tab:mgsm-numbers-7b}.

\begin{table}[h!]
\pgfplotstableread{
pattern opt es en de fr zh ru ja te th Average
{direct} {\model STaR} 0.275 0.275 0.25 0.25 0.2333333333 0.2583333333 0.2333333333 0.1833333333 0.2666666667 0.2472222222
{cot-type-structure} {\model STaR} 0.8	0.8416666667	0.7666666667	0.8333333333	0.8	0.85	0.7166666667	0.7916666667	0.8333333333	0.8037037037
{cot-cascade-structure} {\model STaR} 0.875	0.875	0.8333333333	0.8583333333	0.8	0.875	0.7416666667	0.7333333333	0.8083333333	0.8222222222
{refine-structure} {\model STaR} 0.8666666667	0.9	0.7666666667	0.775	0.7333333333	0.7833333333	0.6916666667	0.725	0.8333333333	0.7861111111
{expression-cascade-structure} {\model STaR} 0.8333334	0.82500005	0.8333333333	0.7583333333	0.7	0.7916666667	0.6583333333	0.75	0.7583333333	0.7593750063
{cot-cascade-structure} {un-adapted} 0.425	0.475	0.467	0.425	0.450	0.533	0.317	0.450	0.542	0.454
{cot-type-structure} {un-adapted} 0.775	0.7916666667	0.8083333333	0.7666666667	0.6833333333	0.7916666667	0.6833333333	0.6916666667	0.7333333333	0.7472222222
{expression-cascade-structure} {un-adapted} 0.7666666667	0.7166666667	0.6916666667	0.7083333333	0.6833333333	0.6833333333	0.6333333333	0.7083333333	0.7333333333	0.6947916667
{cot-cascade-structure} {amortized \model STaR} 0.8416666667	0.9166666667	0.8666666667	0.8333333333	0.825	0.8166666667	0.7083333333	0.775	0.8333333333	0.8240740741
{N/A} {original STaR} 0.7416666667	0.7916666667	0.7583333333	0.7583333333	0.7	0.8833333333	0.7416666667	0.7583333333	0.7583333333	0.76875
}\mgsmalllanguages
    \centering
    \resizebox{\textwidth}{!}{
    \pgfplotstabletypeset[columns/es/.style={numeric type,precision=1,zerofill,preproc/expr={100*##1}},
    columns/en/.style={numeric type,precision=1,zerofill,preproc/expr={100*##1}},
    columns/de/.style={numeric type,precision=1,zerofill,preproc/expr={100*##1}},
    columns/fr/.style={numeric type,precision=1,zerofill,preproc/expr={100*##1}},
    columns/zh/.style={numeric type,precision=1,zerofill,preproc/expr={100*##1}},
    columns/ru/.style={numeric type,precision=1,zerofill,preproc/expr={100*##1}},
    columns/ja/.style={numeric type,precision=1,zerofill,preproc/expr={100*##1}},
    columns/te/.style={numeric type,precision=1,zerofill,preproc/expr={100*##1}},
    columns/th/.style={numeric type,precision=1,zerofill,preproc/expr={100*##1}},
    columns/Average/.style={numeric type,precision=1,zerofill,preproc/expr={100*##1}},
    columns/pattern/.style={string type, column name={Pattern}},
    columns/opt/.style={string type, column name={Adaptation Method}},
        every head row/.style={before row=\toprule, after row={\midrule}},
        every last row/.style={after row=\bottomrule}
    ]{\mgsmalllanguages}}
\caption{\texttt{gemma-2-27b-it} MGSM and MGSM-SymPy per-language accuracies (\model\,and original STaR experiments).}
\label{tab:mgsm-numbers}
\end{table}

\begin{table}[h!]
\pgfplotstableread{
pattern opt es en de fr zh ru ja te th Average
{direct} {\model STaR} 0.058	0.067	0.067	0.083	0.075	0.025	0.050	0.017	0.017	0.051
{cot-cascade-structure} {\model STaR} 0.408	0.358	0.317	0.292	0.242	0.317	0.133	0.183	0.208	0.273
{cot-cascade-structure} {un-adapted} 0.008	0.000	0.008	0.000	0.000	0.008	0.000	0.017	0.000	0.005
{N/A} {original STaR} 0.150	0.275	0.017	0.058	0.225	0.000	0.033	0.092	0.092	0.105
}\mgsmalllanguagessb
    \centering
    \resizebox{\textwidth}{!}{
    \pgfplotstabletypeset[columns/es/.style={numeric type,precision=1,zerofill,preproc/expr={100*##1}},
    columns/en/.style={numeric type,precision=1,zerofill,preproc/expr={100*##1}},
    columns/de/.style={numeric type,precision=1,zerofill,preproc/expr={100*##1}},
    columns/fr/.style={numeric type,precision=1,zerofill,preproc/expr={100*##1}},
    columns/zh/.style={numeric type,precision=1,zerofill,preproc/expr={100*##1}},
    columns/ru/.style={numeric type,precision=1,zerofill,preproc/expr={100*##1}},
    columns/ja/.style={numeric type,precision=1,zerofill,preproc/expr={100*##1}},
    columns/te/.style={numeric type,precision=1,zerofill,preproc/expr={100*##1}},
    columns/th/.style={numeric type,precision=1,zerofill,preproc/expr={100*##1}},
    columns/Average/.style={numeric type,precision=1,zerofill,preproc/expr={100*##1}},
    columns/pattern/.style={string type, column name={Pattern}},
    columns/opt/.style={string type, column name={Adaptation Method}},
        every head row/.style={before row=\toprule, after row={\midrule}},
        every last row/.style={after row=\bottomrule}
    ]{\mgsmalllanguagessb}}
\caption{\texttt{gemma-1.1-7b-it} MGSM per-language accuracies (\model\,and original STaR experiments).}
\label{tab:mgsm-numbers-7b}
\end{table}

\section{Per-Task \model\, MuSR results}

\label{sec:tac-per-task-musr-results}

Per-task \model\, experimental results on task MuSR are listed in \cref{tab:musr-numbers-tac-gemma-2-27b,tab:musr-numbers-tac-gemma-1.1-7b}.

\begin{table}[h!]
\pgfplotstableread{
DM MM OP TA Average
Generation 0.6166666667	0.5158730159	0.4166666667	0.5164021164
Classification 0.65	0.5	0.8	0.65
}\musrtacallltasks
    \centering
    {
    \pgfplotstabletypeset[columns/MM/.style={numeric type,precision=1,zerofill,preproc/expr={100*##1}, column name={Murder Mystery}},
    columns/OP/.style={numeric type,precision=1,zerofill,preproc/expr={100*##1}, column name={Object Placements}},
    columns/TA/.style={numeric type,precision=1,zerofill,preproc/expr={100*##1}, column name={Team Allocation}},
    columns/Average/.style={numeric type,precision=1,zerofill,preproc/expr={100*##1}},
    columns/DM/.style={string type, column name={Decoding Method}},
        every head row/.style={before row=\toprule, after row={\midrule}},
        every last row/.style={after row=\bottomrule}
    ]{\musrtacallltasks}}
\caption{\texttt{gemma-2-27b-it} MuSR per-task accuracies (\model\, experiments).}
\label{tab:musr-numbers-tac-gemma-2-27b}
\end{table}

\begin{table}[h!]
\pgfplotstableread{
DM MM OP TA Average
Generation 0.6	0.4365079365	0.825	0.6205026455
Classification 0.5916666667	0.4285714286	0.8583333333	0.6261904762
}\musrtacallltaskssb
    \centering
    {
    \pgfplotstabletypeset[columns/MM/.style={numeric type,precision=1,zerofill,preproc/expr={100*##1}, column name={Murder Mystery}},
    columns/OP/.style={numeric type,precision=1,zerofill,preproc/expr={100*##1}, column name={Object Placements}},
    columns/TA/.style={numeric type,precision=1,zerofill,preproc/expr={100*##1}, column name={Team Allocation}},
    columns/Average/.style={numeric type,precision=1,zerofill,preproc/expr={100*##1}},
    columns/DM/.style={string type, column name={Decoding Method}},
        every head row/.style={before row=\toprule, after row={\midrule}},
        every last row/.style={after row=\bottomrule}
    ]{\musrtacallltaskssb}}
\caption{\texttt{gemma-1.1-7b-it} MuSR per-task accuracies (\model\, experiments).}
\label{tab:musr-numbers-tac-gemma-1.1-7b}
\end{table}

\section{Per-Task DSPy MuSR results}

\label{sec:dspy-per-task-musr-results}

Per-task DSPy experimental results on task MuSR are listed in \cref{tab:musr-numbers-dspy-gemma-2-27b,tab:musr-numbers-dspy-gemma-1.1-7b}.

\begin{table}[h!]
\pgfplotstableread{
Model OPT MM OP TA Average
{Native CoT} None 0.2083333333	0	0	0.06944444444
{Native CoT} {MIPRO 0-shot} 0.4083333333	0.007936507937	0	0.1387566138
{Native CoT} {MIPRO} 0.5166666667	0.5079365079	0.4916666667	0.5054232804
{Two-step} None 0.525	0.1428571429	0.225	0.2976190476
{Two-step} {MIPRO 0-shot} 0.55	0.2777777778	0.1916666667	0.3398148148
{Two-step} {MIPRO} 0.5916666667	0.4444444444	0.5083333333	0.5148148148
}\musrdspyallltasks
    \centering
    {
    \pgfplotstabletypeset[columns/MM/.style={numeric type,precision=1,zerofill,preproc/expr={100*##1}, column name={Murder Mystery}},
    columns/OP/.style={numeric type,precision=1,zerofill,preproc/expr={100*##1}, column name={Object Placements}},
    columns/TA/.style={numeric type,precision=1,zerofill,preproc/expr={100*##1}, column name={Team Allocation}},
    columns/Average/.style={numeric type,precision=1,zerofill,preproc/expr={100*##1}},
    columns/OPT/.style={string type, column name={Optimizer}},
        every head row/.style={before row=\toprule, after row={\midrule}},
        every last row/.style={after row=\bottomrule},
    columns/Model/.style={string type, column name={Model}},
    ]{\musrdspyallltasks}}
\caption{\texttt{gemma-2-27b-it} MuSR per-task accuracies (DSPy experiments).}
\label{tab:musr-numbers-dspy-gemma-2-27b}
\end{table}

\begin{table}[h!]
\pgfplotstableread{
Model OPT MM OP TA Average
{Native CoT} None 0.1	0.03174603175	0.03333333333	0.05502645503
{Native CoT} {MIPRO 0-shot} 0.06666666667	0.03174603175	0.025	0.04113756614
{Native CoT} {MIPRO} 0.3416666667	0.253968254	0.5	0.3652116402
{Two-step} None 0.3333333333	0.05555555556	0.1666666667	0.1851851852
{Two-step} {MIPRO 0-shot} 0.3583333333	0.01587301587	0.15	0.1747354497
{Two-step} {MIPRO} 0.4416666667	0.3253968254	0.2666666667	0.3445767196
}\musrdspyallltaskssb
    \centering
    {
    \pgfplotstabletypeset[columns/MM/.style={numeric type,precision=1,zerofill,preproc/expr={100*##1}, column name={Murder Mystery}},
    columns/OP/.style={numeric type,precision=1,zerofill,preproc/expr={100*##1}, column name={Object Placements}},
    columns/TA/.style={numeric type,precision=1,zerofill,preproc/expr={100*##1}, column name={Team Allocation}},
    columns/Average/.style={numeric type,precision=1,zerofill,preproc/expr={100*##1}},
    columns/OPT/.style={string type, column name={Optimizer}},
        every head row/.style={before row=\toprule, after row={\midrule}},
        every last row/.style={after row=\bottomrule},
    columns/Model/.style={string type, column name={Model}},
    ]{\musrdspyallltaskssb}}
\caption{\texttt{gemma-1.1-7b-it} MuSR per-task accuracies (DSPy experiments).}
\label{tab:musr-numbers-dspy-gemma-1.1-7b}
\end{table}

\begin{table}[h!]
\pgfplotstableread{
Model OPT MM OP TA Average
{Native CoT} None 0 0 0 0
{Native CoT} {MIPRO 0-shot} 0 0 0 0
{Native CoT} {MIPRO} 0.5583333333	0.5079365079	0.475	0.5137566138
{Two-step} None 0.04166666667	0.007936507937	0	0.01653439153
{Two-step} {MIPRO 0-shot} 0.03333333333	0.01587301587	0	0.0164021164
{Two-step} {MIPRO} 0.65	0.5952380952	0.6	0.6150793651
}\musrdspyallltasksqwen
    \centering
    {
    \pgfplotstabletypeset[columns/MM/.style={numeric type,precision=1,zerofill,preproc/expr={100*##1}, column name={Murder Mystery}},
    columns/OP/.style={numeric type,precision=1,zerofill,preproc/expr={100*##1}, column name={Object Placements}},
    columns/TA/.style={numeric type,precision=1,zerofill,preproc/expr={100*##1}, column name={Team Allocation}},
    columns/Average/.style={numeric type,precision=1,zerofill,preproc/expr={100*##1}},
    columns/OPT/.style={string type, column name={Optimizer}},
        every head row/.style={before row=\toprule, after row={\midrule}},
        every last row/.style={after row=\bottomrule},
    columns/Model/.style={string type, column name={Model}},
    ]{\musrdspyallltasksqwen}}
\caption{\texttt{Qwen3-8B} MuSR per-task accuracies (DSPy experiments).}
\label{tab:musr-numbers-dspy-qwen-8b}
\end{table}

\section{Per-Language DSPy MGSM and MGSM-SymPy results}

\label{sec:dspy-per-language-mgsm-results}

Per-language DSPy experimental results on tasks MGSM and MGSM-SymPy are listed in \cref{tab:mgsm-numbers-dspy-27b,tab:mgsm-numbers-dspy-7b,tab:mgsm-sympy-numbers-dspy-27b}.

\begin{table}[h!]
\pgfplotstableread{
Model Optimizer es en de fr zh ru ja te th Average
{Native CoT} None 0.550	0.575	0.525	0.517	0.542	0.592	0.450	0.392	0.400	0.505
{Native CoT} {BFSWRS} 0.842	0.892	0.875	0.817	0.750	0.875	0.750	0.775	0.792	0.819
{Native CoT} {MIPROv2} 0.825	0.867	0.817	0.767	0.775	0.842	0.700	0.742	0.758	0.788
{Two-step} None	0.017	0.058	0.025	0.017	0.033	0.017	0.017	0.033	0.050	0.030
{Two-step} {MIPROv2}	0.767	0.833	0.767	0.783	0.733	0.792	0.700	0.675	0.717	0.752
{Two-step} {BFSWRS}	0.808	0.842	0.767	0.817	0.700	0.817	0.675	0.642	0.725	0.755
}\mgsmdspyalllanguages
    \centering
    \resizebox{\textwidth}{!}{
    \pgfplotstabletypeset[columns/es/.style={numeric type,precision=1,zerofill,preproc/expr={100*##1}},
    columns/en/.style={numeric type,precision=1,zerofill,preproc/expr={100*##1}},
    columns/de/.style={numeric type,precision=1,zerofill,preproc/expr={100*##1}},
    columns/fr/.style={numeric type,precision=1,zerofill,preproc/expr={100*##1}},
    columns/zh/.style={numeric type,precision=1,zerofill,preproc/expr={100*##1}},
    columns/ru/.style={numeric type,precision=1,zerofill,preproc/expr={100*##1}},
    columns/ja/.style={numeric type,precision=1,zerofill,preproc/expr={100*##1}},
    columns/te/.style={numeric type,precision=1,zerofill,preproc/expr={100*##1}},
    columns/th/.style={numeric type,precision=1,zerofill,preproc/expr={100*##1}},
    columns/Average/.style={numeric type,precision=1,zerofill,preproc/expr={100*##1}},
    columns/Model/.style={string type, column name={Model}},
    columns/Optimizer/.style={string type, column name={Optimizer}},
        every head row/.style={before row=\toprule, after row={\midrule}},
        every last row/.style={after row=\bottomrule}
    ]{\mgsmdspyalllanguages}}
\caption{\texttt{gemma-2-27b-it} MGSM per-language accuracies (DSPy experiments).}
\label{tab:mgsm-numbers-dspy-27b}
\end{table}

\begin{table}[h!]
\pgfplotstableread{
Model Optimizer es en de fr zh ru ja te th Average
{Native CoT} None	0.008	0.008	0.008	0.000	0.000	0.025	0.017	0.000	0.000	0.007
{Native CoT} {BFSWRS}	0.000	0.008	0.017	0.050	0.008	0.017	0.017	0.025	0.000	0.016
{Native CoT} {MIPROv2}	0.008	0.017	0.025	0.025	0.017	0.000	0.017	0.008	0.008	0.014
{Two-step} None	0.000	0.000	0.008	0.000	0.000	0.000	0.017	0.000	0.000	0.003
{Two-step} {MIPROv2}	0.000	0.000	0.000	0.000	0.000	0.008	0.000	0.000	0.008	0.002
{Two-step} {BFSWRS}	0.000	0.000	0.000	0.000	0.000	0.008	0.000	0.000	0.008	0.002
}\mgsmdspyalllanguagessb
    \centering
    \resizebox{\textwidth}{!}{
    \pgfplotstabletypeset[columns/es/.style={numeric type,precision=1,zerofill,preproc/expr={100*##1}},
    columns/en/.style={numeric type,precision=1,zerofill,preproc/expr={100*##1}},
    columns/de/.style={numeric type,precision=1,zerofill,preproc/expr={100*##1}},
    columns/fr/.style={numeric type,precision=1,zerofill,preproc/expr={100*##1}},
    columns/zh/.style={numeric type,precision=1,zerofill,preproc/expr={100*##1}},
    columns/ru/.style={numeric type,precision=1,zerofill,preproc/expr={100*##1}},
    columns/ja/.style={numeric type,precision=1,zerofill,preproc/expr={100*##1}},
    columns/te/.style={numeric type,precision=1,zerofill,preproc/expr={100*##1}},
    columns/th/.style={numeric type,precision=1,zerofill,preproc/expr={100*##1}},
    columns/Average/.style={numeric type,precision=1,zerofill,preproc/expr={100*##1}},
    columns/Model/.style={string type, column name={Model}},
    columns/Optimizer/.style={string type, column name={Optimizer}},
        every head row/.style={before row=\toprule, after row={\midrule}},
        every last row/.style={after row=\bottomrule}
    ]{\mgsmdspyalllanguagessb}}
\caption{\texttt{gemma-1.1-7b-it} MGSM per-language accuracies (DSPy experiments).}
\label{tab:mgsm-numbers-dspy-7b}
\end{table}

\begin{table}[h!]
\pgfplotstableread{
Model Optimizer es en de fr zh ru ja te th Average
{Native CoT} None	0.567	0.667	0.550	0.458	0.475	0.592	0.450	0.492	0.458	0.523
{Native CoT} {MIPROv2}	0.667	0.642	0.583	0.608	0.567	0.625	0.508	0.425	0.517	0.571
{Two-step} None	0.000	0.000	0.000	0.000	0.000	0.000	0.000	0.000	0.000	0.000
{Two-step} {MIPROv2}	0.000	0.000	0.000	0.000	0.000	0.000	0.000	0.000	0.000	0.000
}\mgsmdspyalllanguagessympy
    \centering
    \resizebox{\textwidth}{!}{
    \pgfplotstabletypeset[columns/es/.style={numeric type,precision=1,zerofill,preproc/expr={100*##1}},
    columns/en/.style={numeric type,precision=1,zerofill,preproc/expr={100*##1}},
    columns/de/.style={numeric type,precision=1,zerofill,preproc/expr={100*##1}},
    columns/fr/.style={numeric type,precision=1,zerofill,preproc/expr={100*##1}},
    columns/zh/.style={numeric type,precision=1,zerofill,preproc/expr={100*##1}},
    columns/ru/.style={numeric type,precision=1,zerofill,preproc/expr={100*##1}},
    columns/ja/.style={numeric type,precision=1,zerofill,preproc/expr={100*##1}},
    columns/te/.style={numeric type,precision=1,zerofill,preproc/expr={100*##1}},
    columns/th/.style={numeric type,precision=1,zerofill,preproc/expr={100*##1}},
    columns/Average/.style={numeric type,precision=1,zerofill,preproc/expr={100*##1}},
    columns/Model/.style={string type, column name={Model}},
    columns/Optimizer/.style={string type, column name={Optimizer}},
        every head row/.style={before row=\toprule, after row={\midrule}},
        every last row/.style={after row=\bottomrule}
    ]{\mgsmdspyalllanguagessympy}}
\caption{\texttt{gemma-2-27b-it} MGSM-SymPy per-language accuracies (DSPy experiments).}
\label{tab:mgsm-sympy-numbers-dspy-27b}
\end{table}

\section{DSPy FinQA results}

\label{sec:dspy-finqa-results}

DSPy experimental results on the FinQA task are listed in \cref{tab:finqa-numbers-dspy-27b} and \cref{tab:finqa-numbers-dspy-7b}.

\begin{table}[h!]
\pgfplotstableread{
Model Optimizer Accuracy
{Native CoT} None	0.1166666667
{Native CoT} {MIPROv2 0-shot}	0.1266666667
{Two-step} None	0.05666666667
{Two-step} {MIPROv2 0-shot} 0.1066666667
}\finqa
    \centering
    {
    \pgfplotstabletypeset[columns/es/.style={numeric type,precision=1,zerofill,preproc/expr={100*##1}},
    columns/Accuracy/.style={numeric type,precision=1,zerofill,preproc/expr={100*##1}},
    columns/Model/.style={string type, column name={Model}},
    columns/Optimizer/.style={string type, column name={Optimizer}},
        every head row/.style={before row=\toprule, after row={\midrule}},
        every last row/.style={after row=\bottomrule}
    ]{\finqa}}
\caption{\texttt{gemma-2-27b-it} FinQA accuracy (DSPy experiments).}
\label{tab:finqa-numbers-dspy-27b}
\end{table}

\begin{table}[h!]
\pgfplotstableread{
Model Optimizer Accuracy
{Native CoT} None	0
{Native CoT} {MIPROv2 0-shot}	0.006666666667
{Two-step} None	0
{Two-step} {MIPROv2 0-shot} 0.003333333333
}\finqasb
    \centering
    {
    \pgfplotstabletypeset[columns/es/.style={numeric type,precision=1,zerofill,preproc/expr={100*##1}},
    columns/Accuracy/.style={numeric type,precision=1,zerofill,preproc/expr={100*##1}},
    columns/Model/.style={string type, column name={Model}},
    columns/Optimizer/.style={string type, column name={Optimizer}},
        every head row/.style={before row=\toprule, after row={\midrule}},
        every last row/.style={after row=\bottomrule}
    ]{\finqasb}}
\caption{\texttt{gemma-1.1-7b-it} FinQA accuracy (DSPy experiments).}
\label{tab:finqa-numbers-dspy-7b}
\end{table}

\begin{table}[h!]
\pgfplotstableread{
Model Optimizer Accuracy
{Native CoT} None	0.04333333333
{Native CoT} {MIPROv2 0-shot}	0.05333333333
{Two-step} None	0.01
{Two-step} {MIPROv2 0-shot} 0.12
}\finqaqwen
    \centering
    {
    \pgfplotstabletypeset[columns/es/.style={numeric type,precision=1,zerofill,preproc/expr={100*##1}},
    columns/Accuracy/.style={numeric type,precision=1,zerofill,preproc/expr={100*##1}},
    columns/Model/.style={string type, column name={Model}},
    columns/Optimizer/.style={string type, column name={Optimizer}},
        every head row/.style={before row=\toprule, after row={\midrule}},
        every last row/.style={after row=\bottomrule}
    ]{\finqaqwen}}
\caption{\texttt{Qwen3-8B} FinQA accuracy (DSPy experiments).}
\label{tab:finqa-numbers-dspy-qwen}
\end{table}

\section{Example Expressions from {\bf expression-cascade-structure} under the MGSM-SymPy task}

See \cref{tab:expression-examples}.

\label{sec:expressions-from-ecs}

\begin{table*}[h!]
\centering
{
\begin{tabular}{p{.7\linewidth}p{.07\linewidth}l}
\toprule
Question   & Answer & Expression      \\
\midrule
{\small Nissa hires 60 seasonal workers to play elves in her department store's Santa village. A third of the elves quit after children vomit on them, then 10 of the remaining elves quit after kids kick their shins. How many elves are left?}	     & 20 & $(60 - (60 / 3)) - 10$ \\
{\small The expenditure of Joseph in May was \$500. In June, his expenditure was \$60 less. How much was his total expenditure for those two months?}	     & 940 & $500 + 440$ \\
{\small Tom gets 4 car washes a month. If each car wash costs \$15 how much does he pay in a year?} & 720 & $(15 \times 4) \times 12$ \\
\bottomrule
\end{tabular}
}
\caption{Example arithmetic expressions generated for MGSM questions by {\bf expression-cascade-structure}.}
\label{tab:expression-examples}
\end{table*}

\section{Example instruction prompt generated by LangFun}

\label{sec:prompts-generated-by-langfun}

The LangFun library translates requests that transformed a typed object into another typed object into natural language instructions for LLMs, to facilitate its \texttt{parse} operations. For example, \cref{lst:example-langfun} is a prompt generated by LangFun for the request that transforms a \texttt{Question} object into an \text{Answer} object.

\begin{lstlisting}[caption={Example instruction prompt generated by LangFun}, label=lst:example-langfun]
Please respond to the last INPUT_OBJECT with OUTPUT_OBJECT according to OUTPUT_TYPE.

INPUT_OBJECT:
  1 + 1 =

OUTPUT_TYPE:
  Answer

  ```python
  class Answer:
    final_answer: int
  ```

OUTPUT_OBJECT:
  ```python
  Answer(
    final_answer=2
  )
  ```

INPUT_OBJECT:
  ```python
  Question(
    question='How are you?'
  )
  ```

OUTPUT_TYPE:
  Answer

  ```python
  class Answer:
    answer: str
  ```

OUTPUT_OBJECT:
\end{lstlisting}

\end{document}